\newcolumntype{P}[1]{>{\centering\arraybackslash}p{#1}}
\newcolumntype{M}[1]{>{\centering\arraybackslash}m{#1}}
\definecolor{white}{rgb}{1,1,1}
\definecolor{blue}{rgb}{0.1,0.1,1}
\definecolor{cyan}{rgb}{0.55,0.6,0.8}  
\definecolor{red}{rgb}{0.8,0.2,0.2}  
\definecolor{green}{rgb}{0,0.8,0} 
\definecolor{purple}{rgb}{0.6,0,1}
\definecolor{orange}{rgb}{1,0.6,0.3}
\newcommand{\grey}{\color{cyan}}
\newcommand{\blue}{\color{black}}
\newcommand{\cyan}{\color{black}}
\newcommand{\red}{\color{black}} 
\newcommand{\green}{\color{black}}
\DeclareMathOperator*{\argmin}{arg\,min}
\newcommand{\trans}{^\intercal}
\newcommand{\loss}{\mathcal{L}}
\newcommand{\expect}{\mathbb{E}}
\newcommand{\out}{o}    \newcommand{\Out}{O} 
\newcommand{\labl}{y} \newcommand{\Labl}{Y}
\newcommand{\inp}{x}    \newcommand{\Inp}{X}
\newcommand{\latent}{z} \newcommand{\Latent}{Z}
\newcommand{\causal}{i} 
\newcommand{\spu}{s} 
\newcommand{\Env}{E}
\newcommand{\represent}{f} 
\newcommand{\modifier}{\psi} 
\newcommand{\predictor}{f}
\newcommand{\IRMorig}{{\text{IRM-$\text{v1}$}}}
\newcommand{\MRIl}{{\text{MRI-$\text{v1}$}}}
\theoremstyle{plain}
\newtheorem{theorem}{Theorem}[section]
\newtheorem{lemma}[theorem]{Lemma}
\theoremstyle{definition}
\newtheorem{definition}[theorem]{Definition}
\theoremstyle{remark}
\title{
%
The Missing Invariance Principle Found --  \\ 
the Reciprocal Twin of 
Invariant Risk Minimization 

}
\author{%
  Dongsung Huh\thanks{Equal contribution} \\
  MIT-IBM Watson AI Lab\\
  Cambridge, MA 02142 \\
  \texttt{huh@ibm.com} \\
   \And
   Avinash Baidya\footnotemark[1] \\
   Department of Physics and Astronomy \\
   University of California \\
   Davis, CA 95616 \\
   \texttt{aavinash@ucdavis.edu} \\
}
\begin{document}

\maketitle

\begin{abstract}
    Machine learning models often generalize poorly to out-of-distribution (OOD) data as a result of relying on features that are spuriously correlated with the label during training. Recently, the technique of Invariant Risk Minimization (IRM) was proposed to learn predictors that only use invariant features by conserving the feature-conditioned label expectation $\expect_e[\labl|\predictor({\inp})]$ across environments.
    However, more recent studies have demonstrated that IRM-v1, a practical version of IRM, can fail {\cyan in various settings}. 
    Here, we identify a fundamental {\red design} flaw of IRM formulation that causes the failure.
    We then introduce a complementary notion of invariance, MRI,  based on conserving the label-conditioned feature expectation $\expect_e[\predictor({\inp})|\labl]$,
    which is free of this flaw. 
    Further, we introduce a simplified, practical version of the MRI formulation called  $\MRIl$. 
    We prove that for general linear problems,  $\MRIl$ guarantees invariant predictors given sufficient number of environments. 
    We also empirically demonstrate that 
    MRI-v1 strongly out-performs IRM-v1 and consistently achieves near-optimal OOD generalization in  image-based nonlinear problems. 
\end{abstract}

\section{Introduction}


    

Deep learning models have shown tremendous success over the past decade. 
These models show great generalization properties when tested on the same distribution as the training dataset (in-distribution generalization).
However, these models often show catastrophic failure when tested on out-of-distribution dataset, 
revealing that they {\green learned features} that are spuriously correlated to the label in the given training domains but do not generalize to the testing domains. For example, deep networks trained on pictures of cow with only grassy backgrounds in the training domain will use the background color as the predictive feature which is easier to learn and generalize poorly to pictures of cow with a dessert background.

Recently, 
there has been a growing interest in developing models that generalize well across multiple domains. In particular, there has been a recent body of works that focus on developing algorithms that attempt to learn invariant predictors \citep{arjovsky2019invariant, ahuja2021invariance, peters2016causal, rojas2018invariant, heinze2018invariant}. Invariant Risk Minimization (IRM) and its practical version, $\IRMorig$, {\green have garnered significant attention} 
as one of the initial methods that are compatible with deep learning techniques. However, several follow-up studies have empirically demonstrated that  IRM-v1 is unreliable at learning invariant representations \citep{kamath2021does, rosenfeld2020risks, gulrajani2020search, ahuja2020empirical, ahuja2021invariance}. 
Here, we identify a fundamental flaw of IRM formulation that causes this limitation and propose a new method that is free of this flaw.

\paragraph{Related Works}

There has been considerable work in the field of learning invariant representations. They vary from learning domain-invariant feature representations conserving $P(f(x))$ using kernel methods \citep{muandet2013domain, ghifary2016scatter, hu2020domain}, variational autoencoder \citep{ilse2020diva}, and adversarial networks \cite{ganin2016domain, long2018conditional, akuzawa2019adversarial, albuquerque2019generalizing} to learning invariant class-conditional features $P(f(x)|y)$ \citep{gong2016domain, li2018domain} in the context of domain adaptation, which assumes access to the test distribution for adaptation. 
There is also a large body of work to learn invariant representations in the field of domain generalization that doesn't assume access to test distributions. This includes imposing invariance of $\expect_e[\labl|\predictor(x)]$ \citep{arjovsky2019invariant} with information bottleneck constraint (\cite{ahuja2021invariance}), imposing object-invariant condition (\cite{mahajan2021domain}), using domain inference \citep{creager2021environment}, model calibration \citep{wald2021calibration}, and others \citep{krueger2021out, li2018learning, shankar2018generalizing}.

\paragraph{Our Contributions}

We introduce a variational formulation of IRM,
and show that it can be modified to yield a new complementary notion of invariance, called MRI. 
We show that IRM has a fundamental flaw due to the indirect way of imposing invariance which leads to the failure of IRM-v1. In constrast, MRI is shown to be free of this flaw. 
%
We prove that  $\MRIl$ can guarantee invariant predictors in general linear problem settings given sufficient environments. 
We also show empirical demonstrations that MRI strongly out-performs IRM and consistently achieves near-optimal OOD generalization in nonlinear image-based problems.%
\footnote{Code available at
\url{https://github.com/IBM/MRI}.
}

\section{Problem Formulation}


%

Consider a set of environments $\mathcal{E} = \{e\}$, 
each of which defines a distribution $P_e(\inp,\labl)$ over inputs and labels,
from which the dataset of the environment is drawn $\mathcal{D}_e \equiv \{(x^{e}_j \in \mathbb{R}^{d}, y^{e}_j \in \mathbb{R})\}$.
%
The distribution $P_e(\inp,\labl)$ is assumed to be generated according to the causal graph in Fig~\ref{fig:sem}
\citep{rosenfeld2020risks}, 
which includes latent features that are invariantly  $\latent_\causal$
or spuriously $\latent_\spu$ correlated with the {\green label},
from which the observation $\inp$ is generated.%
%
\footnote{\green 
While Fig~\ref{fig:sem} only shows 
the causal direction $\Labl \to \Latent_\causal$,
\red 
the other direction 
$\Latent_\causal \to \Labl$ is 
also consistent with 
our analysis here 
%
\red 
as long as $P(\labl)$ is independent from the environment index $e$.
}

\begin{wrapfigure}{r}{0.4\textwidth}
    \begin{center}
        \begin{tikzpicture}[
                > = stealth, 
                shorten > = 1pt, 
                auto,
                node distance = 2cm, 
            ]
            \node[obs] (y) {$Y$};
            \node[state] (z_o) [above left of=y] {$\Latent_\causal$};
            \node[state] (z_e) [above right of=y] {$\Latent_\spu$};
            \node[obs] (x) [above=2cm of y] {$\Inp$};
            \node[obs] (e) [right=2cm of y] {$\Env$};
        
            \path[->] (y) edge node {} (z_o);
            \path[->] (y) edge node {} (z_e);
            \path[->] (e) edge node {} (z_e);
            \path[->] (z_o) edge node {} (x);
            \path[->] (z_e) edge node {} (x);
        \end{tikzpicture}
    \end{center}
    \caption{\blue Causal graph 
    depicting the data generating process. Shading indicates the variable is observed.} 
    \vspace{-0.4cm}
    \label{fig:sem}
\end{wrapfigure}
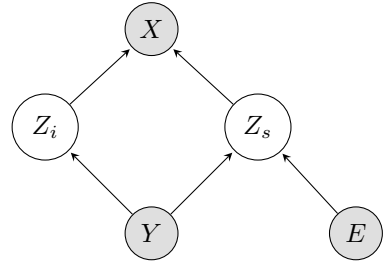

        

The risk of a predictor $\predictor: \Inp \to \Out$ in environment $e$ 
is defined as the population average 
\begin{align}
	\label{eq:env_loss}
    \loss_e(f) &\equiv \expect_{P_e(\inp,\labl)}  [l(f(\inp), \labl)]  \\
	\label{eq:env_loss_IRM}
    &= \expect_{P_e(\out)} [  \expect_{P_e( \labl | \out)}  [ l(\out, \labl) ]] \\
	\label{eq:env_loss_MRI}
    &= \expect_{P(\labl)} [  \expect_{P_e(\out | \labl)}  [ l(\out, \labl) ]] 
\end{align}
where $\out = \predictor(\inp)$ is the predictor's output.
Here, we consider standard convex loss functions 
$l: \Out \times \Labl \to \mathbb{R}_{\geq 0}$,
%
including the square loss 
$l_\text{sq} (\out, \labl) =  \frac {1}{2} (\out - \labl )^2$
for regression  
($\Out,\Labl \subseteq \mathbb{R}$)
and the binary-cross-entropy (BCE) loss
%
$l_\text{log} (\out, \labl) = - (1+\labl) \log(\eta(\out)) - (1-\labl) \log (1-\eta(\out))$ for binary classification
($\Out \subseteq \mathbb{R} $, $\Labl  \subseteq [-1,1]$), where $\eta(\out) \equiv 1/(1+e^{-\out})$ is the sigmoid function.
%


\section{IRM vs MRI Invariance}  \label{sec:IRMvsMRI}
\subsection{IRM Paradigm}
\subsubsection{Original formulation}




%
\begin{definition}{\citep{arjovsky2019invariant}}
\label{def:IRM}
The feature representation $\represent: \Inp \to \Latent$ elicits an {\it {\red IRM} invariant} predictor $\modifier \circ \represent: \Inp \to \Out$ over a set of environments $\mathcal{E}$, 
if there exists $\modifier: \Latent \to \Out$ such that {\blue $\modifier$ is simultaneously optimal for all environments in $\mathcal{E}$},
    {\it i.e.}
    \begin{align}
        \forall e \in \mathcal{E}, ~~~~~~~~~~~~~~~~~~ 
        \modifier \in \argmin_{\bar{\modifier}} \loss_e(\bar{\modifier} \circ \represent)
        \label{eq:IRM_orig}
    \end{align} 
where $\modifier$ is assumed to be unrestricted in the space of all measurable functions.

\end{definition}
%
%
\begin{lemma}{\citep{kamath2021does}}
\label{lem:IRM}
{\blue For standard loss functions\footnotemark,} 
Definition \ref{def:IRM} is equivalent to 
\begin{align}
	\exists \modifier, ~
	\forall e \in \mathcal{E} ,	 ~~~~~~~~~~~~~~~~~~
	&
	 \expect_{e}[\labl | \represent(\inp)] = \sigma(\modifier \circ \represent(\inp)).
	\label{eq:IRM_EIC}
\end{align}
where 
$\expect_e[\labl | \represent(\inp)] \equiv \expect_{P_{e}( \labl |\represent(\inp))}[\labl ]$
is the feature-conditioned label expectation,
and $\sigma$ is a monotonic 
function that depends on the loss function.  
\footnotetext{
\label{sigma}
	Square loss and BCE loss
	are considered:
	$\sigma(\out)=\out$ for square loss, and $\sigma(\out)=\tanh(\out/2)$ for BCE loss.
	%
%
}
\end{lemma}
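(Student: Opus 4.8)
The plan is to prove the two implications separately, the engine being the decomposition \eqref{eq:env_loss_IRM} together with the fact that $\modifier$ ranges over all measurable functions $\Latent \to \Out$. First I would write $\loss_e(\bar\modifier \circ \represent) = \expect_{P_e(\represent(\inp))}\big[\, r_e(\represent(\inp), \bar\modifier(\represent(\inp)))\,\big]$, where $r_e(\latent, o) \equiv \expect_{P_e(\labl \mid \represent(\inp)=\latent)}[\, l(o,\labl)\,]$ is the conditional risk at feature value $\latent$ and output $o$. Since $\bar\modifier$ may take an independent value at each $\latent$, minimizing $\loss_e(\bar\modifier \circ \represent)$ over measurable $\bar\modifier$ decouples into minimizing $o \mapsto r_e(\latent,o)$ for $P_e(\represent(\inp))$-almost every $\latent$, and a measurable-selection argument assembles an optimal $\bar\modifier$ from these pointwise minimizers. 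Both the square and BCE losses are strictly convex in $o$, so $r_e(\latent,\cdot)$ is strictly convex, its minimizer $o^*_e(\latent)$ is unique, and it is pinned down by the first-order condition $\expect_{P_e(\labl\mid\represent(\inp)=\latent)}[\partial_o l(o,\labl)] = 0$.

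Evaluating this condition produces the loss-dependent $\sigma$. For $l_{\text{sq}}$, $\partial_o l_{\text{sq}}(o,\labl) = o-\labl$, hence $o^*_e(\latent) = \expect_e[\labl\mid\represent(\inp)=\latent]$, i.e. $\sigma = \mathrm{id}$. For $l_{\text{log}}$, using $\eta' = \eta(1-\eta)$ one gets $\partial_o l_{\text{log}}(o,\labl) = 2\eta(o) - (1+\labl)$, so $\eta(o^*_e(\latent)) = \tfrac12\big(1 + \expect_e[\labl\mid\represent(\inp)=\latent]\big)$, which (via $\tanh(o/2) = 2\eta(o)-1$) rearranges to $\tanh(o^*_e(\latent)/2) = \expect_e[\labl\mid\represent(\inp)=\latent]$, i.e. $\sigma(o) = \tanh(o/2)$. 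In both cases $o^*_e(\latent) = \sigma^{-1}\!\big(\expect_e[\labl\mid\represent(\inp)=\latent]\big)$ with $\sigma$ monotone; here I would flag the minor caveat that for BCE this presupposes the conditional label mean lies in the interior $(-1,1)$ so that $\sigma^{-1}$ is finite (equivalently, that the $\argmin$ in Definition \ref{def:IRM} is attained).

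To close the equivalence: $\modifier$ satisfies Definition \ref{def:IRM} iff $\modifier \in \argmin_{\bar\modifier} \loss_e(\bar\modifier\circ\represent)$ for every $e$, iff — by the decoupling above and uniqueness of $o^*_e(\latent)$ — $\modifier(\latent) = o^*_e(\latent)$ for $P_e(\represent(\inp))$-a.e.\ $\latent$ and every $e$; applying $\sigma$ turns this into $\expect_e[\labl\mid\represent(\inp)] = \sigma(\modifier\circ\represent(\inp))$ for all $e$, which is exactly \eqref{eq:IRM_EIC}, and the converse is the same chain read backwards (\eqref{eq:IRM_EIC} forces $\modifier(\latent)=o^*_e(\latent)$, so $\modifier$ attains the pointwise minimum of every conditional risk and therefore minimizes every environment risk). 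The step I expect to be the main obstacle is the measure-theoretic bookkeeping of this pointwise-minimization reduction: establishing existence of an optimal measurable $\modifier$ by a measurable selection and keeping track of the $P_e(\represent(\inp))$-almost-everywhere qualifiers across environments — which is precisely why \eqref{eq:IRM_EIC} should be read as an a.e.\ statement, as is standard for Bayes-optimality arguments. Strict convexity is what does the essential work, since it upgrades ``$\modifier$ is a minimizer'' to ``$\modifier$ equals $o^*_e$''; the per-loss first-order computations themselves are routine.
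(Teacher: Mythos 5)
Your proof is correct, and it rests on the same engine the paper uses: the first-order condition $\expect_{P_e(\labl\mid\represent(\inp))}[\partial_\out l(\out,\labl)]=0$ combined with the observation that $\partial_\out l(\out,\labl)=\sigma(\out)-\labl$ for the square and BCE losses, which is exactly how the paper proves the variational analogue (Lemma~\ref{lem:IRM2}); for Lemma~\ref{lem:IRM} itself the paper simply defers to the cited reference. The extra material you supply --- the pointwise decoupling of the $\argmin$ over measurable $\bar\modifier$, strict convexity to get uniqueness of $o^*_e(\latent)$, and the caveat that for BCE the minimizer is finite only when $\expect_e[\labl\mid\represent(\inp)]\in(-1,1)$ --- is the bookkeeping needed to pass from the $\argmin$ formulation of Definition~\ref{def:IRM} to the stationarity condition, and it is sound.
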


\subsubsection{Variational formulation}

The original formulation above 
is overly complex due to 
the composite predictor $\modifier \circ \represent$ and
the optimality condition on $\modifier$.
For further analysis, we introduce 
the following variational formulation.
%
%
\begin{definition}
	A predictor $\predictor: \Inp \to \Out$ is {\it {\red IRM} invariant}
	over a set of environments $\mathcal{E}$,
    if the risk $\loss_e(\predictor)$ 
    {\blue remains stationary} under arbitrary infinitesimal perturbations on {\green the predictor output} $\out = \predictor(\inp)$
	for all environments in $\mathcal{E}$,
{\it i.e.}
\begin{align}	
	\forall e \in \mathcal{E}, 
	~~~~~	\delta  \loss_e(f)  
	& = \lim_{\epsilon\to 0}
	\expect_{P_e(\out, \labl)}  [l(\out+\epsilon \delta\modifier(\out),\labl)-l(\out,\labl)] / \epsilon  \nonumber \\ 
	& = \expect_{P_e(\out)} [  \expect_{P_e( \labl | \out)} [ \partial_\out l(\out,\labl)] \cdot  \delta\modifier(\out) ] 
	= 0 
	\label{eq:IRM_loss_perturb}
\end{align}
where 
%
$\delta\modifier: \Out \to \Out$
is an arbitrary perturbation that is unrestricted in the space of all measurable functions,
{\blue and $\delta  \loss_e(f) $ denotes the resulting change in risk.}%
%
\label{def:IRM2}
\end{definition}
%
%
%
\begin{lemma}
\label{lem:IRM2}
{\blue For standard loss functions$^\text{\ref{sigma}}$,} Definition~\ref{def:IRM2} 
is equivalent to 
%
\begin{align}
	\forall e \in \mathcal{E} ,	 ~~~~~~~~~~~~~~~~~~
	&
	 \expect_{e}[\labl | \out] = \sigma(\out),
	\label{eq:IRM_var_EIC}
\end{align}
{\blue which is equivalent to 
Lemma~\ref{lem:IRM} with 
the composite predictor $\modifier \circ \represent$
replaced by $\predictor$.}
%
\end{lemma}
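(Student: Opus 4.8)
The plan is to reduce Definition~\ref{def:IRM2} directly to the pointwise stationarity condition and then recognize it as the first-order optimality condition already analyzed in Lemma~\ref{lem:IRM}. First I would observe that in Eq.~\eqref{eq:IRM_loss_perturb} the perturbation $\delta\modifier$ ranges over \emph{all} measurable functions of $\out$, so the integral $\expect_{P_e(\out)}[\,g_e(\out)\,\delta\modifier(\out)\,] = 0$ holding for every such $\delta\modifier$ forces the bracketed factor to vanish $P_e(\out)$-almost everywhere, where $g_e(\out) \equiv \expect_{P_e(\labl\mid\out)}[\partial_\out l(\out,\labl)]$. (Standard fundamental-lemma-of-calculus-of-variations argument: if $g_e$ were nonzero on a set of positive measure one could choose $\delta\modifier = \mathrm{sign}(g_e)$ there.) Hence Definition~\ref{def:IRM2} is equivalent to $\expect_{P_e(\labl\mid\out)}[\partial_\out l(\out,\labl)] = 0$ for all $e$.

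Next I would evaluate this condition for the two standard losses. For the square loss $l_\text{sq}(\out,\labl) = \tfrac12(\out-\labl)^2$ we have $\partial_\out l = \out - \labl$, so the condition reads $\out - \expect_e[\labl\mid\out] = 0$, i.e. $\expect_e[\labl\mid\out] = \out = \sigma(\out)$ with $\sigma(\out)=\out$. For the BCE loss $l_\text{log}$, a short computation gives $\partial_\out l_\text{log}(\out,\labl) = 2\eta(\out) - (1+\labl) = (1 - \labl) - 2(1-\eta(\out))$; taking the conditional expectation and setting it to zero yields $\expect_e[\labl\mid\out] = 2\eta(\out) - 1 = \tanh(\out/2) = \sigma(\out)$, matching footnote~\ref{sigma}. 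This is exactly Eq.~\eqref{eq:IRM_var_EIC}.

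Finally, I would note the claimed equivalence with Lemma~\ref{lem:IRM}: the chain rule gives $\partial_{\modifier}\loss_e(\modifier\circ\represent) = \expect[\partial_\out l \cdot (\text{variation of }\modifier)]$, so the stationarity of $\loss_e$ under perturbations of $\modifier$ (with $\modifier$ unrestricted) is the \emph{same} variational condition as in Definition~\ref{def:IRM2} applied to the composite output $\out=\modifier\circ\represent(\inp)$; for convex $l$ stationarity is equivalent to global optimality, which is precisely Eq.~\eqref{eq:IRM_orig}. Substituting $\represent=\mathrm{id}$ (or absorbing $\represent$ into $\predictor$) turns Eq.~\eqref{eq:IRM_EIC} into Eq.~\eqref{eq:IRM_var_EIC}.

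I expect the only real subtlety to be the measure-theoretic justification of the first step — passing from "the integral vanishes for all $\delta\modifier$" to "the integrand vanishes a.e." — together with making sure the interchange of limit and expectation in the definition of $\delta\loss_e(f)$ is legitimate (dominated convergence, using convexity/local Lipschitzness of $l$); the loss-specific algebra in the second step is routine.
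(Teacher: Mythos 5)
Your proposal is correct and follows essentially the same route as the paper: reduce the variational condition to the pointwise vanishing of $\expect_{e}[\partial_\out l(\out,\labl)\,|\,\out]$, then use the fact that for the standard losses $\partial_\out l(\out,\labl) = -\labl + \sigma(\out)$ to obtain Eq.~\eqref{eq:IRM_var_EIC}. The only difference is that you work out the square-loss and BCE cases explicitly and spell out the calculus-of-variations and convexity justifications that the paper leaves implicit, which is a welcome addition but not a different argument.
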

\begin{proof}
Eq~\eqref{eq:IRM_loss_perturb} is satisfied if and only if 
$\expect_{P_e( \labl | \out)} [ \partial_\out l(\out,\labl)]= 0$.
For standard loss functions, 
the loss derivative has the form 
$ \partial_\out l(\out,\labl)=- \labl + \sigma(\out) $, which yields
$\expect_{e} [ \partial_\out l(\out,\labl) | \out]= - \expect_e[ \labl | \out] + \sigma(\out)=0$.

%
%
%
%
\end{proof}
{\blue 
Note that 
even though Definition~\ref{def:IRM2} 
describes only the first-order condition for 
the predictor to be simultaneously optimal over all environments,
this is indeed the necessary and sufficient condition for optimality, since the loss function $l$ is convex.
%
This yields  
a simpler formulation of IRM without requiring a composite form for the predictor $\modifier \circ \represent$.
}

\subsubsection{Conservation law of IRM}

{\blue
As noted in 
\cite{arjovsky2019invariant,kamath2021does},  
the essence of IRM's invariance 
is the conservation of the \emph{feature-conditioned label expectation},} 
i.e. 
\begin{align}
	\forall e_1,e_2 \in \mathcal{E} ,	 ~~~~~~~~~~~~~~~~~~
	&
\expect_{e_1}[\labl | \represent(\inp)] =  \expect_{e_2}[\labl | \represent(\inp)].
	\label{eq:IRM_True_EIC}
\end{align}
%
%
{\green 
This result can be easily seen from eq~\eqref{eq:IRM_EIC},\eqref{eq:IRM_var_EIC},
since their 
RHS term  $\sigma(\out)$ is constant with respect to the environment index $e$.

{\red 
\paragraph{Remark}
Notice 
an intriguing discrepancy in the number of constraints:} 
IRM (eq~\eqref{eq:IRM_orig},\eqref{eq:IRM_EIC},\eqref{eq:IRM_var_EIC})
imposes one constraint per environment,
total of $|\mathcal{E}|$  constraints, 
whereas 
the conservation law describes 
$|\mathcal{E}|-1$ equality relationships 
that $ \expect_{e}[\labl | \out]$ should share the same value
across environments. 
%
The missing constraint 
is that 
IRM additionally requires 
the shared value of $ \expect_{e}[\labl | \out]$ 
to also be equal to $\sigma(\out)$.
This discrepancy 
{\green emphasizes} 
the fact 
that the equality relationships in 
eq~\eqref{eq:IRM_True_EIC} 
are not direct constraints imposed to hold {\red between} environments, 
%
but rather 
{\red a byproduct --- an indirect consequence}   
of separate individual constraints 
all sharing a common intermediate term, $\sigma(\out)$.
Furthermore, 
it shows that 
the invariance guarantee of IRM singularly depends on the constancy of this shared term across environments,
which
{\red 
proves to be a single point of failure for IRM.} 
}






\subsubsection{IRM-v1}
\label{sec:IRMv1}
Due to the impracticality  
of considering the unrestricted function space of $\modifier$, \cite{arjovsky2019invariant} suggested restricting $\modifier$ to the space of linear functions.
In the variational formulation, this corresponds to restricting the output perturbations $\delta \modifier$ to the space of linear functions, 
which, for scalar outputs, is equivalent to the identity function, $\delta \modifier (\out) = \out$.
%
 %
This reduces eq~\eqref{eq:IRM_loss_perturb} to
\begin{align}
	\label{eq:IRM_loss_perturb_linear}
	\forall e \in \mathcal{E}, 
	~~~~~	\delta  \loss_e(f)  
	&= \expect_{e} [ \, \partial_\out l(\out,\labl) \cdot \out \,] 
	= 0 .
\end{align}
The reduced constraints 
in eq~\eqref{eq:IRM_loss_perturb_linear}
are identical to 
IRM-s in \cite{kamath2021does}.%
\footnote{IRM-s imposes
$\partial_\modifier 
\expect_{e}  [ \, l( \modifier \cdot \out,\labl) ] 
= 0$,
where $\modifier$ is a scalar factor.
Note that $\partial_\modifier l(\modifier \cdot \out,\labl) = \partial_\out l(\out,\labl) \cdot \out$.}
In \cite{arjovsky2019invariant}, this reduced formulation is termed IRM-v1 when the constraints are imposed in a {\it soft manner}, {\it i.e.} as squared penalty terms (See eq~\eqref{eq:penalty_loss}).
In the literature, the term IRM-v1 is widely used for the reduced formulation regardless of whether hard or soft constraints are used, which we adopt here. 

%
%

{\blue
However,
IRM-v1 
has been empirically found 
to behave quite differently from IRM and fail even in simple problems
\citep{kamath2021does}. 
This failure mechanism can be analytically understood here:
Since 
$ \partial_\out l(\out,\labl) \cdot \out  = 
 \out ( \sigma(\out) - \labl) $
for standard loss functions$^\text{\ref{sigma}}$,
eq~\eqref{eq:IRM_loss_perturb_linear} 
is equivalent to 
\begin{align}
	\label{eq:IRM_v1_EIC}
	\forall e \in \mathcal{E}, ~~~~~~
	\expect_e [  \out \labl  ] = \expect_e [  \out\sigma(\out) ] .
 \end{align}
%
%
Note that 
{\green the RHS 
of eq~\eqref{eq:IRM_v1_EIC}
 originates from the RHS of 
eq~\eqref{eq:IRM_var_EIC}.
Unlike  $\sigma(\out)$  of eq~\eqref{eq:IRM_var_EIC}, 
%
however, 
$\expect_e [  \out\sigma(\out) ]$ 
is not constant, 
since it involves expectation that depends on the environment,
and therefore it fails to mediate 
any meaningful invariance relationship. 
{\red 
%
In Supplementary Materials, we generalize this result to 
the wider class of perturbations 
that are 
mixtures of nonlinear basis functions
\cyan (See Supplementary Materials B).
}

\paragraph{Fundamental flaw of IRM}
The above analysis shows that 
IRM's indirect mechanism 
for attaining invariance through a shared intermediate term 
is in fact quite fragile, 
which easily breaks 
when the function space of $\delta\modifier$ (or $\modifier$) gets restricted.
{\red We identify this as the} {\it fundamental {\cyan design} flaw of IRM}.
%

%

}}

\subsection{MRI Paradigm}

We now introduce a {\green complementary notion} of invariance by 
considering infinitesimal perturbations on label,
which we call the \emph{\green Mirror Reflected IRM}, or \emph{MRI}.
\begin{definition} 
	\label{def:MRI_invariance}
	A predictor $\predictor: \Inp \to \Out$ is {\it {\red MRI} invariant} over a set of environments $\mathcal{E}$,
	if the change in risk 
	due to arbitrary {\blue infinitesimal} label perturbations is \emph{shared} across environments, 
	{\it i.e.}
	\begin{align}
		\label{eq:MRI_constraint}
		\forall e_1,e_2 \in \mathcal{E},	 ~~~~~~   \delta  \loss_{e_1}(f) & =  \delta \loss_{e_2}(f) \\
		\text{where} 	 ~~~~~~
		\delta  \loss_e(f) 
		& \equiv \lim_{\epsilon\to 0} 
		\expect_{P_e(\inp,\labl)}  [ \, l(\out,\labl+\epsilon\delta \modifier(\labl))-l(\out,\labl) \, ]/\epsilon \nonumber \\
		& = \expect_{P(\labl)} [ \, \expect_{e}  [ \, \partial_\labl l(\out,\labl) | \labl \,] \cdot \delta \modifier(\labl)\, ]
		\label{eq:MRI_constraint2}
	\end{align}
    where $\delta \modifier: \Labl \to \Labl$ 
    is an arbitrary perturbation that is unrestricted in the space of all measurable functions,
	{\blue and $\delta  \loss_e(f) $ denotes the resulting change in risk.
	$ \expect_{e} [  \partial_\labl l(\out, \labl) |  \labl ] 
	\equiv \expect_{P_e(\out | \labl)}  [  \partial_\labl l(\out,\labl)] $.

}
\end{definition}
%

\begin{lemma} 
{\blue For standard loss functions,}
Definition~\ref{def:MRI_invariance}
is equivalent to 
\begin{align}
	\forall e_1,e_2 \in \mathcal{E} ,	 ~~~~~~~~~~~~~~~
	&
	\expect_{e_1}[\out|\labl ] =  \expect_{e_2}[\out|\labl],
	\label{eq:MRI_conservation}
\end{align}
which
conserves the label-conditioned feature expectation
$\expect_e[\out|\labl] \equiv \expect_{P_{e}( \out|\labl)}[\out]$
across environments.

\end{lemma}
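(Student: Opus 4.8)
The plan is to mirror the argument of Lemma~\ref{lem:IRM2}, exploiting the same structural feature of standard losses but now differentiating with respect to $\labl$ instead of $\out$. First I would observe that for the square loss $\partial_\labl l(\out,\labl) = \labl - \out$, and for the BCE loss $\partial_\labl l(\out,\labl) = \log\big((1-\eta(\out))/\eta(\out)\big) = -\out$; so in both admissible cases the label-derivative decomposes as $\partial_\labl l(\out,\labl) = -\out + \rho(\labl)$, where $\rho$ is a fixed function of $\labl$ alone ($\rho(\labl)=\labl$ for square loss, $\rho(\labl)=0$ for BCE). The key point is that $\rho$ does not depend on the predictor output, hence not on the environment.

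Next I would substitute this form into the right-hand side of eq~\eqref{eq:MRI_constraint2}. Taking the conditional expectation $\expect_e[\,\cdot\mid\labl\,]$ is linear and commutes with the decomposition, giving $\expect_e[\partial_\labl l(\out,\labl)\mid\labl] = -\expect_e[\out\mid\labl] + \rho(\labl)$, so that $\delta\loss_e(f) = \expect_{P(\labl)}\big[(\rho(\labl) - \expect_e[\out\mid\labl])\cdot\delta\modifier(\labl)\big]$. Since the marginal $P(\labl)$ is the same in every environment (the standing assumption noted below Fig~\ref{fig:sem}), forming the difference of two environments cancels the $\rho(\labl)$ term outright: $\delta\loss_{e_1}(f) - \delta\loss_{e_2}(f) = \expect_{P(\labl)}\big[(\expect_{e_2}[\out\mid\labl] - \expect_{e_1}[\out\mid\labl])\cdot\delta\modifier(\labl)\big]$.

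Finally, by the fundamental lemma of the calculus of variations — using that $\delta\modifier$ ranges over all measurable maps $\Labl\to\Labl$, so one may localize it around an arbitrary label value via a bump-type perturbation — the MRI condition eq~\eqref{eq:MRI_constraint} that this difference vanish for every $\delta\modifier$ is equivalent to its scalar factor being zero for $P(\labl)$-almost every $\labl$, i.e. $\expect_{e_1}[\out\mid\labl] = \expect_{e_2}[\out\mid\labl]$, which is exactly eq~\eqref{eq:MRI_conservation}. The converse direction is immediate by reading the same identity backwards.

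I expect no serious obstacle; the only points needing care are (i) verifying the $\partial_\labl l = -\out + \rho(\labl)$ form for each standard loss, and (ii) the assumption that $P(\labl)$ is environment-independent, which is precisely what lets the $\rho(\labl)$ terms cancel. It is worth emphasizing, as a contrast to the IRM discussion, that this cancellation is what makes MRI structurally different: its defining condition is a genuine \emph{between}-environment equality, so the conservation law eq~\eqref{eq:MRI_conservation} emerges directly rather than as an indirect byproduct of per-environment conditions sharing a common intermediate term — the fragility of which was identified above as IRM's design flaw.
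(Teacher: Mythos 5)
Your proof is correct and follows essentially the same route as the paper's: both rest on the decomposition $\partial_\labl l(\out,\labl) = -\out + \rho(\labl)$ for the standard losses, the observation that $\rho(\labl)$ cancels when differencing environments (using that $P(\labl)$ is environment-independent), and the variational argument that vanishing for all measurable $\delta\modifier$ forces the conditional expectations to agree. You merely spell out the calculus-of-variations step and the loss-derivative computations more explicitly than the paper does; no substantive difference.
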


\begin{proof}
    Eq~\eqref{eq:MRI_constraint}  is satisfied if and only if 
	$ \expect_{e} [  \partial_\labl l(\out, \labl) |  \labl ] $
	is conserved.
	For standard loss functions,
	the loss derivative has the form
    $\partial_\labl l(\out, \labl) = - \out + \rho(\labl)$%
	\footnotemark. 
	%
	Therefore, 
	$ \expect_{e_1} [  \partial_\labl l(\out, \labl) |  \labl ] - \expect_{e_2} [  \partial_\labl l(\out, \labl) |  \labl ]
	= \expect_{e_1} [ \out |  \labl ] - \expect_{e_2} [  \out  |  \labl ] = 0$.
	\footnotetext{
		\label{rho}
		Square loss and BCE loss
		are considered:
		$\rho(y)=y$ for square loss, and  $\rho(y)=0$ for BCE loss.}	
\end{proof}
{\green
\paragraph{Remark}
Note that 
MRI attains invariance in a direct manner without involving any intermediate term, which 
results in the number of constraints
in eq~\eqref{eq:MRI_constraint} matching  the conservation law eq~\eqref{eq:MRI_conservation}.}







\subsubsection{MRI-v1}

Restricting the label perturbations to the space of {\green linear functions}, or equivalently, 
an identity function $\delta \modifier (\labl) = \labl$,
reduces 
eq~\eqref{eq:MRI_constraint2}  to 
%
$    \delta\loss_{e}(f) 
= \expect_{e} [ \, \partial_\labl l(\out, \labl)  \cdot \labl \, ]$.
This reduces the conservation law eq~\eqref{eq:MRI_conservation} to
\begin{align}
	\forall e_1,e_2 \in \mathcal{E} ,	 ~~~~~~~~~~~~~~~~~~
	\expect_{e_1}[\out\labl ] =  \expect_{e_2}[\out\labl].
	\label{eq:MRIv1_conservation}    
\end{align}
which describes 
a necessary condition for the MRI invariance eq~\eqref{eq:MRI_conservation}.
%
{\blue 
Therefore, 
MRI's direct mechanism for attaining invariance 
continues to hold 
when $\delta\modifier$ is restricted to the space of linear functions.
}




\begin{table}[t!]
	\begin{center}
	  \caption{
	{\blue 	Constraint functions of IRM-v1 and MRI-v1.
    $\out = \predictor(\inp; w)$. 
    }
	%
	  }  %
	  \label{tab:all_constraints}
	  \vspace*{3mm}
	  \begin{tabular}		{l | r | c} 		
		\toprule
		Algorithm & Constraint $ \vec{c}(f)$ & $\delta \loss_e(\predictor)$ \\
		\midrule
		IRM-v1 
		& $  \vec{\delta \loss} (f) ~~~~~ $
        & $ \expect_{e} [ \, \partial_\out l(\out, \labl)  \cdot \out \, ] $
		\\[0.05cm]
		MRI-v1 
		& $ Q\, \vec{\delta \loss}(f) ~~~~~ $
		& $ \expect_{e} [ \, \partial_\labl l(\out, \labl)  \cdot \labl \, ] $
		\\[0.05cm]
		\bottomrule
	  \end{tabular}
	\end{center}
\end{table}


\section{Methods} 
\label{section:methods}

\subsection{Constrained optimization problem}

The full methods of IRM-v1 and MRI-v1 
can be formalized as a constrained optimization problem
\begin{align}
    ~~~~~~~~~~~~
	\min_{f \in \mathcal{F}} \loss_\text{tr}(\predictor) 
	~~~~~~~~~~\textrm{subject to}  ~~
	\vec{c}(\predictor) = \vec{0} ,
    \label{eq:constrained_objective}
\end{align}
where 
$\loss_\text{tr}
= \frac{1}{|\mathcal{E}_\text{tr}|}\sum_{e\in\mathcal{E}_\text{tr}} \loss_e
$
is the average risk over the set of training environments 
$\mathcal{E}_\text{tr} \subset \mathcal{E}$.
%
%
{\blue 
The constraint functions are 
$\vec{c} = \vec{\delta \loss} \in \mathbb{R} ^{|\mathcal{E}_\text{tr}|}$ 
for IRM-v1 and 
$\vec{c} = Q\,\vec{\delta \loss} \in \mathbb{R} ^{|\mathcal{E}_\text{tr}|-1}$ 
for MRI-v1, where 
$\vec{\delta \loss} \in \mathbb{R} ^{|\mathcal{E}_\text{tr}|}$
is a vector of perturbed risks $\delta \loss_e$ for $e\in \mathcal{E}_\text{tr}$, 
and $Q \in \mathbb{R} ^{(|\mathcal{E}_\text{tr}|-1) \times |\mathcal{E}_\text{tr}|}$
is an orthonormal matrix that satisfies 
$Q \, \vec{1} = \vec{0}$,
such that $Q\,\vec{\delta \loss}$ computes the differences of 
$\delta\loss_{e}$ 
between environments.
%
For example, for 
a training set of two environments
$\mathcal{E}_\text{tr} = \{e_1, e_2\}$,  
$Q\,\vec{\delta \loss} = (\delta\loss_{e_1} - \delta\loss_{e_2})/\sqrt{2}$,
since $Q = [1, -1]/\sqrt{2}$.
See Table~\ref{tab:all_constraints}.
}



\subsection{Soft-constraint methods}

Numerically solving IRM-v1 and MRI-v1 
requires converting the hard constraints $\vec{c}(\predictor) = \vec{0} $ to 
soft constraints,
which allows the use of off-the-shelf gradient-based optimization algorithms.

\paragraph{Penalty Method (PM)}

Penalty method is the most commonly used approach, 
including in \citet{arjovsky2019invariant},
which adds the squared residual constraints as a penalty term to the objective, 
%
%
%
\begin{equation}
    \min_{f \in \mathcal{F}} 
    \,
    \loss_\text{tr}(\predictor)
    + \mu  \left\lVert \vec{c}(\predictor) \right\rVert ^2
	\label{eq:penalty_loss}
\end{equation}
%
%
However, this method requires 
{\blue increasing $ \mu^t \to \infty$ over training iteration $t$}
in order to approximate the exact hard-constraint,
which leads to  
training instability and slow convergence 
\citep{bertsekas1976multiplier}.
%

\paragraph{Augmented Lagrangian Method (ALM)}

ALM was introduced to overcome the limitations of penalty method \citep{bertsekas1976multiplier}, 
which adds a Lagrange multiplier term to \eqref{eq:penalty_loss}
\begin{equation}
    \min_{f \in \mathcal{F}} 
    \,
	\loss_\text{tr}(\predictor) 
    + \mu \left\lVert \vec{c}(\predictor) \right\rVert ^2
    +  \vec{\lambda}^\intercal \cdot \vec{c}(\predictor),
	\label{eq:ALMloss}
\end{equation} 
%
where 
$\vec{\lambda}$ is typically initialized at $\vec{0}$
and 
updated at each training iteration $t$
to accumulate the residual constraints $\vec{c}(\predictor(w^t))$.
%
%
In practice, ALM can operate 
with moderate values of $\mu$ ($\sim 10$) without fine tuning, 
and thus exhibits fast and stable convergence 
\citep{bertsekas1976multiplier}.

\section{Analytic Results} 
\label{sec:linear_setting}

\subsection{General Linear SEM}

In this section, we demonstrate that MRI-v1 can effectively eliminate all features that are spuriously correlated with the label in a linear predictor, given a sufficient number of environments. 
%
%
Consider a data generating process according to Fig~\ref{fig:sem}, 
in which 
the observation $\inp = g(\latent_\causal,\latent_\spu)$ is an injective linear function
of the latent features $\latent_\causal \in \mathbb{R}^{d_\causal}, \latent_\spu \in \mathbb{R}^{d_\spu}$. 
%
%
Note that this  
{\blue Structural Equation Model (SEM) \citep{pearl2009causality} 
does not require any assumptions on the generation process $\Labl \to \Latent_\causal, \Latent_\spu$,
which generalizes the SEM of \cite{rosenfeld2020risks}, which additionally assumed binary labels and additive Gaussian noise for generating the latent features.}

We consider a linear predictor $\predictor:\Inp \to \Out$.
Since $g$ is injective and has an inverse over its range, without loss of generality, we can define $\predictor$ as a linear function directly over the latents as
\begin{equation}
    \out = \predictor(x; w) = w_\causal^\intercal \cdot \latent_\causal+ w_\spu^\intercal \cdot \latent_\spu
    \label{eq:linear_predictor}
\end{equation}
with parameters $w \equiv \{ w_\causal \in \mathbb{R}^{d_\causal}, w_\spu \in \mathbb{R}^{d_\spu} \}$.
%
%

\begin{theorem}
 Given $|\mathcal{E}_\text{tr}| > d_s$ training environments, and that $\expect_e[\latent_\spu \labl]$ are in general linear positions,
%
MRI-v1 will eliminate all spurious feature dimensions.
\label{thm:linearmodeltheorem}
\end{theorem}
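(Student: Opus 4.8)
The plan is to show that the MRI-v1 hard constraint alone already forces $w_\spu=\vec 0$ for every feasible predictor, after which the conclusion is immediate. I start from the MRI-v1 conservation law (eq~\eqref{eq:MRIv1_conservation}, cf.\ Table~\ref{tab:all_constraints}): the scalar $\expect_e[\out\labl]$ must take the same value in all training environments $e\in\mathcal E_\text{tr}$. Substituting the linear predictor~\eqref{eq:linear_predictor} gives $\expect_e[\out\labl]=w_\causal^\intercal\,\expect_e[\latent_\causal\labl]+w_\spu^\intercal\,\expect_e[\latent_\spu\labl]$, so the constraint becomes a statement purely about these two environment-indexed moment vectors.

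The key structural step is to observe that, under the SEM of Fig~\ref{fig:sem}, the vector $\expect_e[\latent_\causal\labl]$ does not depend on $e$. Indeed $\Env$ has no directed path to $\Latent_\causal$, so $P_e(\latent_\causal\mid\labl)$ is the environment-independent mechanism $\Labl\to\Latent_\causal$ (and in the reverse-arrow variant of the footnote, $P(\latent_\causal)$ is likewise environment-independent), while $P(\labl)$ is environment-independent by assumption; hence $\expect_e[\latent_\causal\labl]\equiv\expect[\latent_\causal\labl]$ is a fixed vector. Consequently the MRI-v1 constraint collapses to: $w_\spu^\intercal\,\expect_e[\latent_\spu\labl]$ is constant over $e\in\mathcal E_\text{tr}$, i.e., writing $v_e\equiv\expect_e[\latent_\spu\labl]\in\mathbb R^{d_\spu}$, we obtain the homogeneous linear system $w_\spu^\intercal(v_e-v_{e'})=\vec 0$ for all $e,e'\in\mathcal E_\text{tr}$.

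It then remains to argue that this system has only the trivial solution. Fixing one environment $e_0$, the vector $w_\spu$ must be orthogonal to every difference $v_e-v_{e_0}$, hence to the linear span of $\{v_e-v_{e_0}:e\in\mathcal E_\text{tr}\}$, which is the direction space of the affine hull of $\{v_e\}_{e\in\mathcal E_\text{tr}}$. The ``general linear position'' hypothesis on the $v_e$ means any $d_\spu+1$ of them are affinely independent; since we are given $|\mathcal E_\text{tr}|>d_\spu$, i.e.\ at least $d_\spu+1$ environments, their affine hull is all of $\mathbb R^{d_\spu}$, so the $|\mathcal E_\text{tr}|-1\ge d_\spu$ difference vectors span $\mathbb R^{d_\spu}$ and therefore $w_\spu=\vec 0$. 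Thus every predictor satisfying the MRI-v1 constraint uses no spurious latent dimension, and the residual minimization of $\loss_\text{tr}$ in~\eqref{eq:constrained_objective} runs over $w_\causal$ alone, yielding an invariant predictor.

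The step deserving the most care — and the main obstacle — is the structural reduction in the second paragraph: one must be explicit about what Fig~\ref{fig:sem} guarantees, namely that $\expect_e[\latent_\causal\labl]$ is environment-invariant whereas $\expect_e[\latent_\spu\labl]$ need not be, since this asymmetry is exactly why MRI-v1 kills spurious but not causal features. A secondary point is the precise translation of ``general linear position'' into the spanning statement and the counting of independent difference vectors; once that is pinned down, the remaining linear algebra is routine.
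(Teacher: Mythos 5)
Your proof is correct and follows essentially the same route as the paper's: reduce the MRI-v1 constraint to a homogeneous linear system in $w_\spu$ and use the general-position/rank assumption to conclude $w_\spu=\vec 0$ (your spanning-of-differences argument is exactly the paper's statement that the centered moment matrix $M'$ has rank $d_\spu$). You are in fact more explicit than the paper on the one step it leaves implicit, namely that $\expect_e[\latent_\causal\labl]$ is environment-independent under the SEM so that the causal term drops out of the constraint.
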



\begin{proof}

MRI-v1's constraint eq~\eqref{eq:MRIv1_conservation}
yields 
\begin{equation*}
\forall e \in \mathcal{E}_\text{tr}, ~~~
\expect_e[\out \labl] = w_\causal \cdot \expect_e[\latent_\causal \labl] + w_\spu \cdot \expect_e[\latent_\spu \labl] = const,
\end{equation*}
which can be expressed in a matrix form as
\begin{equation} 
w_\spu \cdot M' = 0,
\label{eq:matrix_eq}
\end{equation}
where $M' = M - \bar{M} \cdot 1$ with $M \equiv [ \expect_e[\latent_\spu \labl] ]_{e \in \mathcal{E}_\text{tr}}  \in \mathbb{R}^{d_s \times|\mathcal{E}_\text{tr}|}$
and $ \bar{M} = \frac{1}{|\mathcal{E}_\text{tr}|} \sum_{e \in \mathcal{E}_\text{tr}} \expect_e[\latent_\spu \labl] $.

Since $rank(M') = d_s$, eq~\eqref{eq:matrix_eq} is 
equivalent to 
$w_\spu=0$.








\end{proof}
%
A similar result was shown for $\IRMorig$, but under a more restricted setting, such as  
a specialized linear family of environments with binary labels and additive Gaussian noise \citep{rosenfeld2020risks}. In fact, IRM-v1 has been shown to fail 
in more general linear problems with non-Gaussian noise \citep{arjovsky2019invariant, kamath2021does}. 

\captionsetup[subfigure]{position=top,textfont=normalfont,singlelinecheck=off,justification=raggedright}
\begin{figure}[t]
	\begin{center}
	\includegraphics[width=.8\columnwidth]{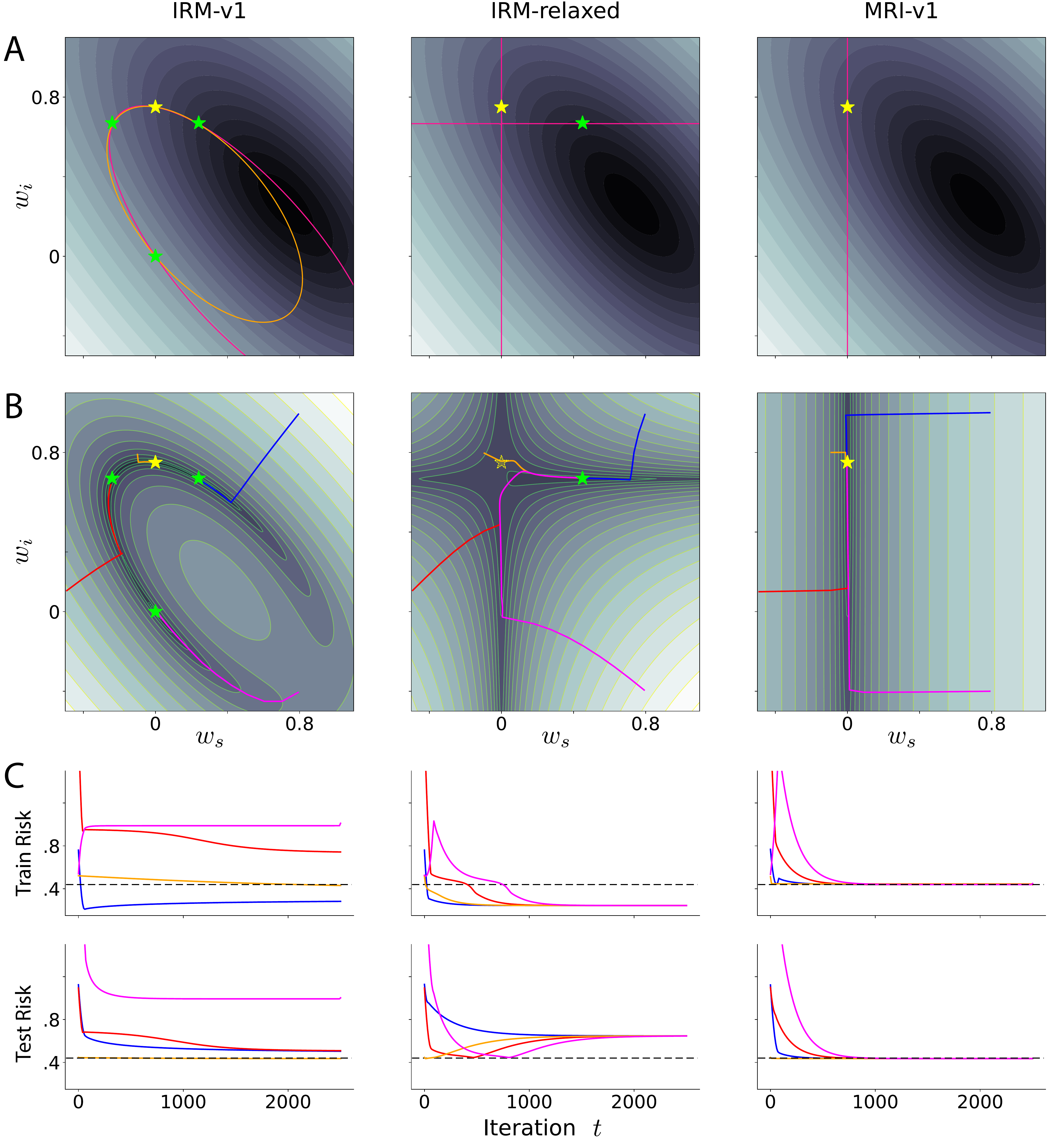}
	\caption{	Minimal example in Section~\ref{sec:minimal_example}.
	\blue 
	(A) Hard-constraint case.
	Background: Average train domain risk $\loss_\text{tr}(f(w))$. 
	Lines: Invariance constraints $\vec{c}(f(w))=\vec{0}$.
	Stars: Local constrained optima.
	(B) Soft-constraint case.
	Background: Train loss $\loss_\text{tr}(f(w)) + \mu \left\lVert \vec{c}(f(w)) \right\rVert ^2$.
	Convergence trajectories $w^t$ from multiple initialization are shown. 
	%
	(C) Loss profiles of the above convergence trajectories.
	Color-matched with panel B. 
	(Top) Average train domain risk $\loss_\text{tr}(f(w^t))$.
	(Bottom) Test domain risk $\loss_\text{test}(f(w^t))$.
	Dashed line: True optimal invariant solution (yellow star in panel A/B).
	}
	\label{fig:loss_landscape}
	\end{center}
	\vspace{-0.5cm}
\end{figure}

\subsection{Minimal Example: $d_i=1, ~d_s=1, ~ |\mathcal{E}_\text{tr}|=2$}
\label{sec:minimal_example}

{\blue


Here,
we demonstrate 
a minimal case of the above general linear problem
that involves one invariant feature, one spurious feature and two training environments 
(Shape-Texture linear regression problem in Section~\ref{sec:dataset}).
}
See Supplementary Materials for the detailed experimental set up and the analytical solutions.
%
A similar minimal examples for linear binary classification are also analyzed and shown in Supplementary Materials (Fig~\ref{fig:linear_classification}, linear shape-texture classification and toy-CMNISTa/b).

\paragraph{Analytic solutions (Hard-constraints, Fig~\ref{fig:loss_landscape}A)}



IRM-v1 has two quadratic equality constraints,
{\blue 
$\expect_{e_1} [  \out^2 - \out \labl  ] = \expect_{e_2} [  \out^2 - \out \labl  ] = 0$, 
}
shown as two elliptic curves. 
The intersection between the non-convex constraints on a 2-D feature space 
yields a disjoint set of 0-D points,
{\blue all of which are local constrained optima,
%
including the true invariant optimum, 
a zero-predictor solution, and two non-invariant solutions.
Note that one of the non-invariant solutions exhibits lower train loss than the true invariant optimum solution.}
%



{\blue 
We also test the relaxed version of IRM-v1 
by removing the extraneous constraint.
{\blue The relaxed version has a single constraint  
$ 
\expect_{e_1} [ \out^2 - \out \labl ] = \expect_{e_2} [ \out^2 - \out \labl ] $,
}
which describes a pair of hyperbolic curves (appears as two straight lines in Fig~\ref{fig:loss_landscape}A),}
{\it i.e.} a non-convex constraint. This problem exhibits two local constrained optima,
including the true invariant optimum and a non-invariant solution,
with the non-invariant solution exhibiting a lower train loss. 
Therefore, 
simply relaxing the extraneous constraint
does not resolve the {\red fundamental problem} of IRM-v1. 

In contrast, 
MRI-v1 has one linear equality constraint, 
{\blue $ \expect_{e_1} [  \out \labl ] = \expect_{e_2} [  \out \labl ]  $,}
{\red that exactly prescribes the set of all invariant solutions $w_s=0$.}
{\blue
That is, a solution is an invariant predictor \emph{if and only if it satisfies this constraint.}
This is a convex problem, since both the objective and the constraint are  convex, and thus features a unique optimum, 
which is the true invariant optimum solution. 
}


\paragraph{Convergence Dynamics (Soft-constraints, Fig~\ref{fig:loss_landscape}B/C)}

Here, we analyze 
the optimization dynamics 
under full-batch gradient descent (penalty method  eq~\eqref{eq:penalty_loss} with $\mu = 5\times10^4$).
%
IRM-v1's convergence is highly dependent on the initialization (shown with differently colored trajectories), due to the presence of multiple local minima. 
Note that most trajectories do not converge to the true invariant optimum. 
%
IRM-relaxed also exhibits complex dynamics due to a saddle point near the true invariant solution: 
Some trajectories (red and magenta) first approach the line of invariant solutions, 
but all trajectories eventually converge to the non-invariant solution. 
Note that the true invariant optimum solution is not even a local optimum of IRM-relaxed at this value of $\mu$, but it would exist in the limit $\mu\to\infty$. 
%
In contrast, MRI-v1 always converges to the true invariant optimum regardless of initialization since it is a unique minimum.

\section{Nonlinear Image-based Problems}

{\green 
Unlike for linear problems, theoretical proof for invariance is difficult to show for nonlinear problems.
Here, we empirically investigate  
the performance of IRM-v1 and MRI-v1 in nonlinear image-based problems.}



\subsection{Datasets}
\label{sec:dataset}

\paragraph{Shape-Texture Dataset}
%
{\green We introduce a new dataset that is designed to evaluate domain generalization algorithms across various settings, 
including linear regression, linear classification, nonlinear image-based regression, and nonlinear image-based classification.
}
%
%
{\blue 
The generative process of the dataset involves 
an invariant feature $\latent_\causal = e^{i\theta_\causal}$,
a spurious feature $\latent_\spu = e^{i\theta_\spu}$,
and a label feature $ e^{i\theta_\labl}$,
each of which 
represents an orientation on a complex unit circle: $e^{i\theta} \in \mathcal{S}^1$. 
%
The angles of orientations are generated as 
$\theta_\labl \sim \mathcal{U}_{S^1}$, 
where $\mathcal{U}_{S^1}$ is the circular uniform distribution, and 
for $* \in \{\causal,\spu\}$, $\theta_* = \theta_\labl$ with probability $p_*$ 
or $\theta_* \sim  \mathcal{U}_{S^1}$ with probability $1-p_*$.
%
The parameter $p_\causal = 0.75$  
is fixed across environments, 
whereas  $p_\spu$ 
varies from one environment to another. 
We consider two training environments $\mathcal{E}_\text{tr} = \{ e_1, e_2 \}$ with 
$p_{\spu_{e_1}} = 1, ~ p_{\spu_{e_1}} = 0.8$ and one testing environment $\mathcal{E}_\text{test} = \{ e_0 \}$ with $p_{\spu_{e_0}} = 0$. 
}

In the linear regression task (section~\ref{sec:minimal_example}), the observed input is the concatenated latent features 
$\inp = [e^{i\theta_\causal}, e^{i\theta_\spu}]$
and the label is $\labl = e^{i\theta_\labl}$. 
In the linear classification task, the input is $\inp = [\sin(\theta_\causal), \sin(\theta_\spu)]$ and the label is $\labl = H(\sin(\theta_\labl))$, where $H$ is {\blue the sign function}. 
In the nonlinear regression/classification tasks, 
the observed input is the image composed of two planar waves,
in which $\theta_\causal$ is the orientation of the low frequency wave ({\it i.e.} shape) 
and $\theta_\spu$ is that of the high frequency wave ({\it i.e.} texture),
as shown in Fig~\ref{fig:images_shape_texture}.
%
%
%
%
%

\paragraph{Colored MNIST (CMNIST)}
CMNIST \citep{arjovsky2019invariant} is a synthetic dataset derived from MNIST for binary classification.
{\green
In this dataset, the label $\labl$ assigned to an image is based on the digit bit $\latent_\causal$ 
(1 for digits $0\sim4$ and -1 for $5\sim9$) 
such that $y=\latent_\causal$ with  probability $p_\causal$ or $-\latent_\causal$ with  probability $1-p_\causal$. 
The color bit $\latent_\spu$ (1 for red -1 for green) is chosen based on the label such that $\latent_\spu=\labl$ with probability $p_\spu$ or $-\labl$ with probability $1-p_\spu$.
}
%
We consider two versions, CMNISTa and CMNISTb, with two sets of environmental parameters: 
CMNISTa is the version from  \citet{arjovsky2019invariant} 
with $p_\causal = 0.75$, 
$p_{\spu_{e_1}} = 0.9, ~ p_{\spu_{e_1}} = 0.8$, and $p_{\spu_{e_0}} = 0.1$ 
for the training $\mathcal{E}_\text{tr} = \{ e_1, e_2 \}$ and the testing  $\mathcal{E}_\text{test} = \{ e_0 \}$ environments;
CMNISTb uses 
$p_\causal = 0.9$, 
$p_{\spu_{e_1}} = 1, ~ p_{\spu_{e_1}} = 0.8$, $p_{\spu_{e_0}} = 0.1$.
%
{\blue In the nonlinear tasks, the input observation $x$ is the colored MNIST image.
In the abstracted versions, called toy-CMNISTa/b, the input observation is the two-bit  data $x = [\latent_\causal, \latent_\spu]$
\citep{kamath2021does}.
}

{\blue
\paragraph{Remark}
Note that both Shape-Texture and CMNIST datasets can be equally understood as being generated from Fig~\ref{fig:sem}
with causal directions $\Labl \to \Latent_\causal$ or $\Latent_\causal \to \Labl$ (Fig~\ref{fig:joint_dist}).
}

\subsection{Result} 

\begin{table}[t]
    \begin{subtable}[c]{1\textwidth}
    \centering
    \vspace*{1mm}
    \caption*{\textbf{Risk}}
    \scalebox{0.68}{
    \begin{tabular}{M{2.5cm}|cc|cc|cc|cc} 
    \toprule
     \multirow{2}{*}{}
     & \multicolumn{2}{c|}{S-T Regression} & \multicolumn{2}{c|}{S-T Classification} &  \multicolumn{2}{c|}{CMNISTa} &
     \multicolumn{2}{c}{CMNISTb}\\
     &  $\text{Train}$ & $\text{Test}$ & $\text{Train}$ & $\text{Test}$  & $\text{Train}$ & $\text{Test}$ & $\text{Train}$ & $\text{Test}$ \\ 
    \midrule
    \grey Oracle
    & \grey $0.46 \pm 0.00$ & \grey $0.46 \pm 0.00$ 
    & \grey $0.47 \pm 0.00$ & \grey $0.47 \pm 0.00$ 
    & \grey $0.57 \pm 0.00$ & \grey $0.58 \pm 0.00$
    & \grey $0.22 \pm 0.00$ & \grey $0.24 \pm 0.01 $ \\[0.3cm]
    ERM 
    & $0.16 \pm 0.00$ & $1.37 \pm 0.01$ 
    & $0.24 \pm 0.00$ & $1.16 \pm 0.01 $ 
    & $0.36 \pm 0.00$ & $1.44 \pm 0.01 $ 
    & $0.13 \pm 0.00$ & $0.73 \pm 0.01 $  \\[0.3cm]
    IRM-v1 (PM)
    & $1.00 \pm 0.00$ & $1.00 \pm 0.00$ 
    & $0.69 \pm 0.00$ & $0.69 \pm 0.00$ 
    & $0.69 \pm 0.00$ & $0.69 \pm 0.00$ 
    & $0.69 \pm 0.00$ & $0.69 \pm 0.00$ \\[0.3cm]
    IRM-v1 (ALM)
    & $0.23 \pm 0.00$ & $0.62 \pm 0.01 $ 
    & $0.4 \pm 0.01 $ & $0.51 \pm 0.01 $
    & $0.62 \pm 0.02 $ & $0.69 \pm 0.02 $ 
    & $0.17 \pm 0.01 $ & $0.47 \pm 0.02 $  \\[0.3cm]
    MRI-v1 (PM)
    & $0.53 \pm 0.03 $ & $0.54 \pm 0.03 $ 
    & $0.44 \pm 0.01 $ & $\bm{0.46} \pm 0.01 $ 
    & $0.62 \pm 0.01 $ & $0.66 \pm 0.01 $ 
    & $0.46 \pm 0.01 $ & $0.4 \pm 0.01 $
    \\[0.3cm]
    MRI-v1 (ALM)
    & $0.45 \pm 0.01 $ & $\bm{0.46} \pm 0.00$
    & $0.47 \pm 0.00$ & $ \bm{0.47} \pm 0.00$
    & $0.63 \pm 0.02 $ & $\bm{0.64} \pm 0.01 $
    & $0.25 \pm 0.01 $ & $\bm{0.29} \pm 0.01 $ \\[0.05cm] 
    \bottomrule
    \end{tabular}}
    \end{subtable}
    \begin{subtable}[c]{1\textwidth}
    \centering
    \vspace*{1mm}
    \caption*{\textbf{Accuracy}}
    \scalebox{0.7}{
    \begin{tabular}{M{2.5cm}|cc|cc|cc} \toprule
     \multirow{2}{*}{}
     & \multicolumn{2}{c|}{S-T Classification} 
     & \multicolumn{2}{c|}{CMNISTa} 
     & \multicolumn{2}{c}{CMNISTb}
     \\
    &  $\text{Train}$ & $\text{Test}$  
    & $\text{Train}$ & $\text{Test}$
    & $\text{Train}$ & $\text{Test}$ 
    \\ 
    \midrule
     \grey Oracle
    &  \grey $0.86 \pm 0.00 $ &  \grey $0.86 \pm 0.00 $
    &  \grey $0.74 \pm 0.00 $ &  \grey $0.74 \pm 0.00 $
    &  \grey $0.94 \pm 0.00 $ &  \grey $0.94 \pm 0.00 $ \\[0.3cm]
    ERM 
    & $0.95 \pm 0.00 $ & $0.50 \pm 0.00 $ 
    & $0.85 \pm 0.00 $ & $0.10 \pm 0.00 $ 
    & $0.94 \pm 0.00 $ & $0.80 \pm 0.02 $ \\[0.3cm]
    IRM-v1 (PM)
    & $0.52 \pm 0.02 $ & $0.52 \pm 0.04 $ 
    & $0.73 \pm 0.14 $ & $0.23 \pm 0.16 $ 
    & $0.6 \pm 0.03 $ & $0.52 \pm 0.05 $ \\[0.3cm]
    IRM-v1 (ALM)
    & $0.79 \pm 0.01 $ & $0.77 \pm 0.01 $ 
    & $0.64 \pm 0.03 $ & $\bm{0.66} \pm 0.03 $
    & $0.93 \pm 0.00 $ & $\bm{0.91} \pm 0.00 $ \\[0.3cm]
    MRI-v1 (PM)
    & $0.86 \pm 0.01 $ & $\bm{0.85} \pm 0.01 $ 
    & $0.68 \pm 0.01 $ & $0.63 \pm 0.02 $ 
    & $0.82 \pm 0.01 $ & $0.86 \pm 0.01 $
    \\[0.3cm]
    MRI-v1 (ALM)
    & $0.86 \pm 0.01 $ & $\bm{0.86} \pm 0.01 $
    & $0.66 \pm 0.02 $ & $\bm{0.65} \pm 0.02 $
    & $0.93 \pm 0.00 $ & $\bm{0.9} \pm 0.00 $  \\[0.05cm] 
    \bottomrule
    \end{tabular}}
    \vspace{0.1cm}
    \end{subtable}
    \caption{
    Comparison of algorithms on the Shape-Texture (S-T) and the Colored MNIST-a/b datasets:
    (Top) average risk $\loss_\text{tr}$, $\loss_\text{test}$, and 
    (bottom) accuracy. Oracle uses environments in which the spurious features are uncorrelated with the label. Mean and standard deviation shown up to 2 decimal places.}
    \label{tab:main_results}
\end{table}

Here, 
we report the performance of $\IRMorig$, $\MRIl$, 
{\blue as well as the vanilla Empirical Risk Minimization (ERM) algorithm ({\it i.e.} without imposing any invariance constraint).}
For reference, 
the results are compared to the {\it Oracle}  performance,
which is obtained by
applying ERM on the modified training datasets in which the spurious features are rendered uncorrelated with the label. 

We tested the algorithms under a wide range of hyperparameters for both the Shape-Texture  (Fig~\ref{fig:ST_PM}) and the CMNIST-b (Fig~\ref{fig:CMNIST_PM}) datasets. 
Overall, 
MRI-v1 consistently achieves good invariant performance close to the Oracle, 
whereas IRM-v1 often shows either chance-level performance or poor OOD generalization with large differences between the train and the test domain risks. 
%
%
The results for a specific hyperparameter setting is shown in 
Table~\ref{tab:main_results}. 

Under the PM setting, we observe that IRM-v1 often drives the models to the {\it zero-predictor} solution,
{\it i.e.} making zero output regardless of the input, even with annealing the penalty term to be applied only in the later phase of training. This explains
IRM-v1's identically low performance across train and test domains, 
consistent to the previously reported results in \cite{gulrajani2020search}. 
%
In contrast,  MRI-v1 never drives the models to the zero-predictor solution, consistent with the finding in Sec~\ref{sec:minimal_example} that MRI-v1 does not have an local minima at the zero-predictor in the linear problem setting. 

Interestingly, the ALM setting greatly improves IRM-v1's performance especially  in terms of accuracy, while 
MRI-v1's performance remains relatively unchanged between PM and ALM. 
Under the ALM setting,
IRM-v1 shows comparable or slightly higher accuracy than MRI-v1 for the CMNIST dataset.

In the Supplementary Materials, we also report the results for other recent domain generalization methods including MMD \citep{li2018domain}, GroupDRO \citep{sagawa2019distributionally}, and IB-IRM \citep{ahuja2021invariance}  on the Shape-Texture classification task,
which exhibit significantly worse performance than MRI-v1
(Fig~\ref{fig:ST_other}).

\begin{figure}
	\begin{center}
 \includegraphics[width=0.9\columnwidth]{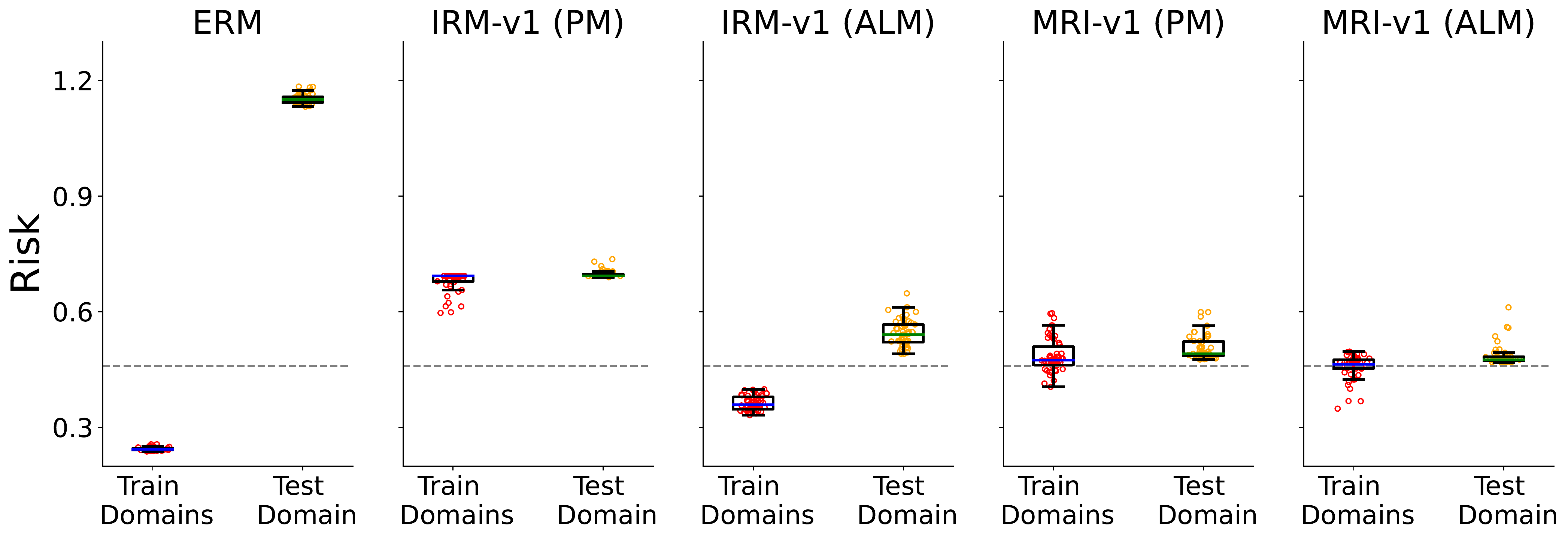}\quad  
 \includegraphics[width=0.9\columnwidth]{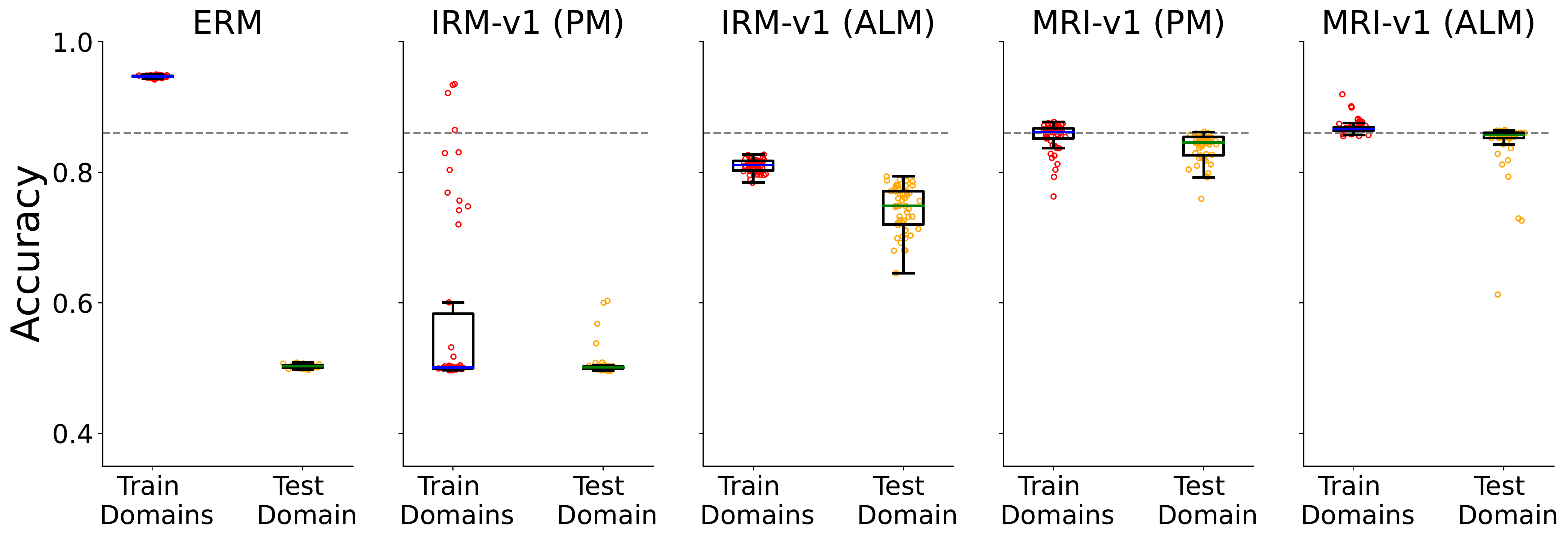}\quad
	\caption{
	Comparison of algorithms' performance over a range of hyperparameters 
	on the Shape-Texture classification dataset. 
	(Top) averaged train and test domain risk 
	and (bottom) accuracy.
	The grey dashed line denotes the Oracle performance. Box-plots show sample quartiles.
}
	\label{fig:ST_PM}
	\end{center}
\end{figure}
%
%
\begin{figure}
	\begin{center}
    \includegraphics[width=0.9\columnwidth]{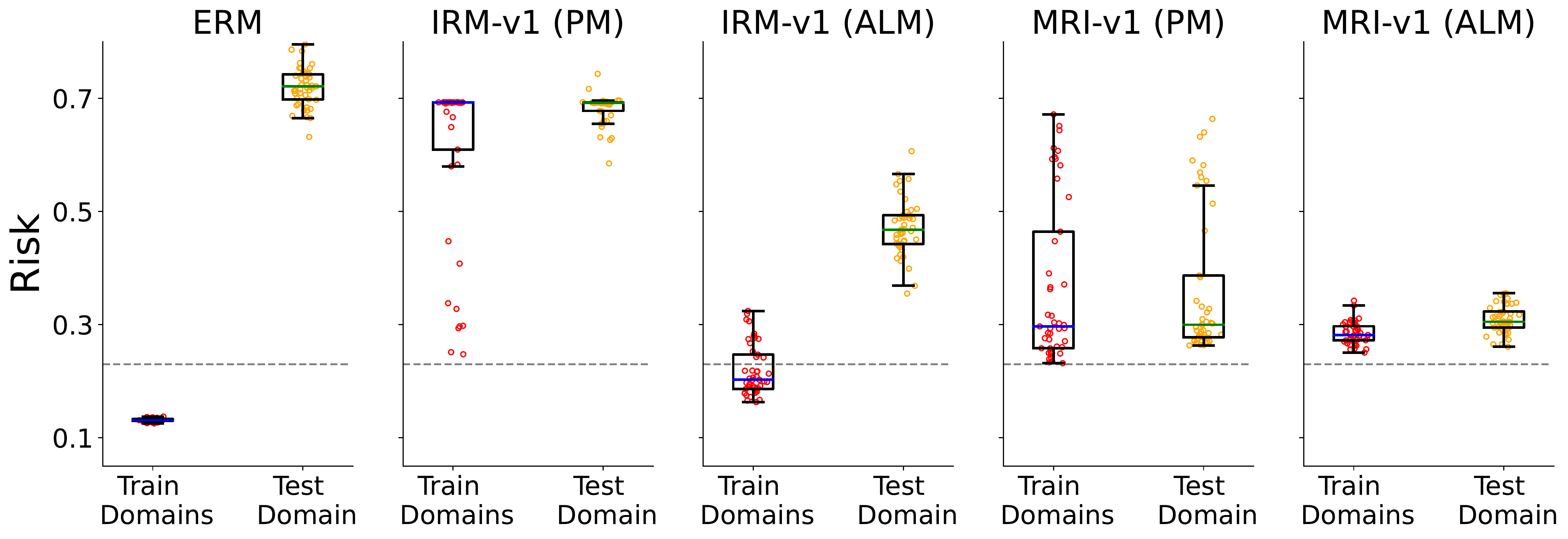}\quad
    \includegraphics[width=0.9\columnwidth]{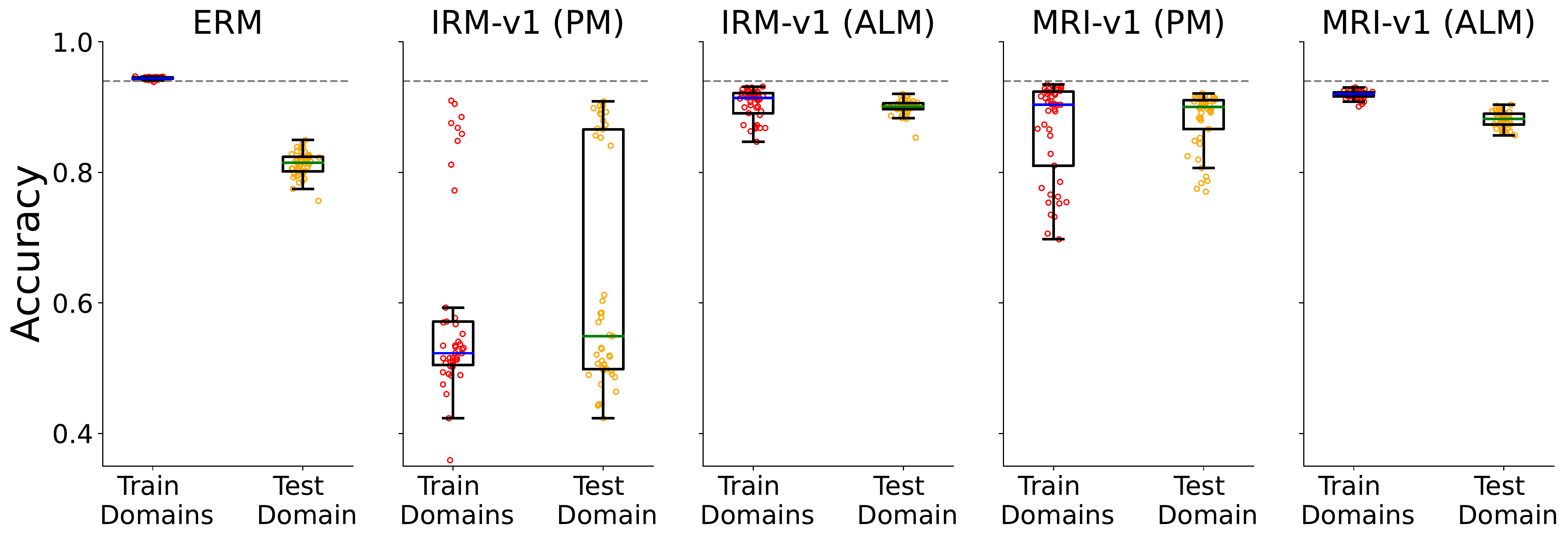}\quad
	\caption{
Result on CMNIST-b dataset. Same as Fig~\ref{fig:ST_PM}.
}
	\label{fig:CMNIST_PM}
	\end{center}
\end{figure}

\section{Discussion}
\label{sec:discussion}

{\green
\paragraph{Limitations} 

In principle, IRM 
only requires $\mathbb{E}_e[\labl|\latent_{\causal}]$ to be constant across domains in order to guarantee invariance,
and therefore it has been previously thought to be generally applicable in a wide range of problems,
even though this guarantee 
was only shown in the impractical limit of 
optimizing over the unrestricted function space of $\modifier$. 
Here, we showed {\red a strong negative result} that 
no {\cyan meaningful} form of invariance can be stated for IRM 
when the function space of $\modifier$ is restricted.
%
%
In contrast, MRI  
requires a more limiting condition that
the label distribution $P(\labl)$ and 
$P(\latent_{\causal})$  
should be constant across domains,
{\red but 
it can be more generally applied 
even for the case of restricted function space of $\modifier$.
which yields more practicality. 
}

A common limitation of both MRI and IRM \citep{rosenfeld2020risks, ahuja2021invariance} is that they require significant support overlap across domains in order to guarantee OOD generalization. This may limit the applicability of these methods on certain domain generalization benchmarks 
that consist of domains that lack such overlap,
such as 
VLCS (different image stylization, \cite{fang2013unbiased}), 
and Terra-Incognita (different natural backgrounds, \cite{beery2018recognition}). In the Supplementary Materials, we report that both methods do not show significant improvement over ERM on these datasets 
(Table~\ref{tab:vlcs_acc_results}). 
%
}

{\green
\paragraph{Insensitivity of accuracy metric in evaluating invariance} 

For CMINST{\cyan a/}b, IRM-v1 (ALM) exhibits  comparable performance 
to MRI-v1  in terms of test domain accuracy,
despite having significantly worse performance in term of test domain risk
(See Table~\ref{tab:main_results}). 
We investigated this phenomenon 
by analyzing the linear version of the task, 
toy-CMINSTa/b. 
The accuracy landscape 
is piece-wise constant
(Fig~\ref{fig:toy_CMNIST}C,F). Especially, it exhibits identical ({\it i.e.} invariant) accuracy between train and test domain in the region defined by $w_i>|w_s|$. 
Therefore, the accuracy metric cannot distinguish
invariant solutions ($w_s=0$) from non-invariant solutions within the region.
In contrast, the train domain and the test domain risk share the same value only if the solution is invariant ($w_s=0$) (Fig~\ref{fig:toy_CMNIST}B,E).
{\red The constraint function of IRM-v1 (Fig~\ref{fig:toy_CMNIST}A,D) shows 
that toy-CMNISTa only has invariant local optima and 
that toy-CMNISTb has additional non-invariant local optima, 
all of which satisfy $w_i>|w_s|$, thus exhibiting the same accuracy performance as the invariant optimum solution.  
}
This result illustrates that using accuracy metric alone 
for evaluating the degree of invariance could be insufficient,
{\red and highlights the need to also consider risk for evaluations.}


%
}

\begin{ack}
This research was done as part of Avinash Baidya's internship at IBM. 
We thank Joel Dapello for helpful discussions.
We also thank the anonymous reviewers for constructive comments. 
\end{ack}

\bibliography{referenceIRM}
\bibliographystyle{apalike}

\newpage

\section*{Checklist}

The checklist follows the references.  Please
read the checklist guidelines carefully for information on how to answer these
questions.  For each question, change the default \answerTODO{} to \answerYes{},
\answerNo{}, or \answerNA{}.  You are strongly encouraged to include a {\bf
justification to your answer}, either by referencing the appropriate section of
your paper or providing a brief inline description.  For example:
\begin{itemize}
  \item Did you include the license to the code and datasets? \answerYes{See Section~??.}
  \item Did you include the license to the code and datasets? \answerNo{The code and the data are proprietary.}
  \item Did you include the license to the code and datasets? \answerNA{}
\end{itemize}
Please do not modify the questions and only use the provided macros for your
answers.  Note that the Checklist section does not count towards the page
limit.  In your paper, please delete this instructions block and only keep the
Checklist section heading above along with the questions/answers below.

\begin{enumerate}

\item For all authors...
\begin{enumerate}
  \item Do the main claims made in the abstract and introduction accurately reflect the paper's contributions and scope?
    \answerYes{}
  \item Did you describe the limitations of your work?
    \answerYes{} See Section~\ref{sec:discussion}.
  \item Did you discuss any potential negative societal impacts of your work?
    \answerNA{}
  \item Have you read the ethics review guidelines and ensured that your paper conforms to them?
    \answerYes{}
\end{enumerate}

\item If you are including theoretical results...
\begin{enumerate}
  \item Did you state the full set of assumptions of all theoretical results?
    \answerYes{} See Section~\ref{sec:IRMvsMRI} and \ref{sec:linear_setting}.
        \item Did you include complete proofs of all theoretical results?
    \answerYes{} See Section~\ref{sec:IRMvsMRI} and \ref{sec:linear_setting}.
\end{enumerate}

\item If you ran      experiments...
\begin{enumerate}
  \item Did you include the code, data, and instructions needed to reproduce the main experimental results (either in the supplemental material or as a URL)?
    \answerYes{} \url{https://github.com/IBM/MRI}
  \item Did you specify all the training details (e.g., data splits, hyperparameters, how they were chosen)?
    \answerYes{} See Supplementary Materials.
        \item Did you report error bars (e.g., with respect to the random seed after running experiments multiple times)?
    \answerYes{} See Table~\ref{tab:main_results}.
        \item Did you include the total amount of compute and the type of resources used (e.g., type of GPUs, internal cluster, or cloud provider)?
    \answerYes{} See Supplementary Materials.
\end{enumerate}

\item If you are using existing assets (e.g., code, data, models) or curating/releasing new assets...
\begin{enumerate}
  \item If your work uses existing assets, did you cite the creators?
    \answerYes{} See Section~\ref{sec:dataset} and Supplementary Materials.
  \item Did you mention the license of the assets?
    \answerYes{} See Supplementary Materials.
  \item Did you include any new assets either in the supplemental material or as a URL?
    \answerYes{} \url{https://github.com/IBM/MRI}
  \item Did you discuss whether and how consent was obtained from people whose data you're using/curating?
    \answerNA{}
  \item Did you discuss whether the data you are using/curating contains personally identifiable information or offensive content?
    \answerNA{}
\end{enumerate}

\item If you used crowdsourcing or conducted research with human subjects...
\begin{enumerate}
  \item Did you include the full text of instructions given to participants and screenshots, if applicable?
    \answerNA{}
  \item Did you describe any potential participant risks, with links to Institutional Review Board (IRB) approvals, if applicable?
    \answerNA{}
  \item Did you include the estimated hourly wage paid to participants and the total amount spent on participant compensation?
    \answerNA{}
\end{enumerate}

\end{enumerate}

\appendix
\onecolumn

\section{Analytical Solution for Toy Shape-Texture Regression}

The Shape-Texture dataset involves an invariant latent feature 
$\theta_\causal \in \mathbb{R}$, 
a spurious latent feature 
$\theta_\spu \in \mathbb{R}$, 
and the label $\theta_\labl \in \mathbb{R}$, 
each of which represents an angle of orientation, 
and therefore is naturally represented on a unit circle in the complex plane as  $e^{i\theta} \in \mathcal{S}^1$.
In the toy linear regression setting, the observed input is simply the concatenated features 
%
    $x = \begin{bmatrix}
            e^{i\theta_\causal}\\
            e^{i\theta_\spu}
        \end{bmatrix} $
and the label is $\labl = e^{i\theta_\labl}$.

For all task settings, the latent angles and the label angle are 
linearly related as
$
    \theta_\causal \leftarrow \theta_\labl + \xi_\causal, 
    \theta_\spu \leftarrow \theta_\labl +  \xi_\spu,
$
where $\xi_* \sim \mathcal{N}(0,\sigma_*)$ with probability $p_*$, 
or $\xi_* \sim \mathcal{U}(-\pi,\pi)$ probability $1-p_*$
for $* \in \{\causal,\spu\}$. 
We used 
$p_\causal = 0.75$ for all environments, 
$p_{\spu_1} = 0.8, p_{\spu_2}= 1$ for the two training environments, $p_{\spu_\text{test}}=0$ for the testing environment, 
and $\sigma_\causal=\sigma_{\spu_e}=0$ for all environments.

Note that
\begin{align*}   
    \expect_e[e^{i(\theta_\causal-\theta_\labl)}] 
        &= \begin{cases}
                1,& \quad \text{with prob.} \ \ p_\causal\\
                0,& \quad \text{with prob.} \ \ 1-p_\causal
            \end{cases}\\
    \expect_e[e^{i(\theta_\spu-\theta_\labl)}] 
            &= \begin{cases}
                1,& \quad \text{with prob.} \ \ p_\spu\\
                0,& \quad \text{with prob.} \ \ 1-p_\spu
            \end{cases}\\
    \expect_e[e^{i(\theta_\causal-\theta_\spu)}] 
            &= \begin{cases}
                1,& \quad \text{with prob.} \ \ p_\causal p_\spu\\
                0,& \quad \text{with prob.} \ \ 1-p_\causal p_\spu
            \end{cases}
\end{align*}
which yields
\begin{align}
    \expect_e[xx^\dagger] 
            &= \begin{bmatrix}
                    1 &  \expect_e[e^{i(\theta_\causal-\theta_\spu)}]\\
                    \expect_e[e^{-i(\theta_\causal-\theta_\spu)}] & 1
                \end{bmatrix} 
            = \begin{bmatrix}
                1 &  p_\causal p_\spu\\
                p_\causal p_\spu & 1
            \end{bmatrix} 
    \\
    \expect_e[x \labl^\dagger] 
            &= \begin{bmatrix}
                    \expect_e[e^{i(\theta_\causal-\theta_\labl)}] \\
                    \expect_e[e^{i(\theta_\spu-\theta_\labl)}]
                \end{bmatrix}
            = \begin{bmatrix}
                p_\causal \\
                p_\spu
            \end{bmatrix}
\end{align}

We consider real weights 
$ w = \begin{bmatrix}
        w_{\causal} &  w_\spu
      \end{bmatrix}  $. 
Since $\out = w x$, 
\begin{align*}
    \expect_e[\out \out^\dagger] 
    & = w \expect_e[xx^\dagger] w\trans 
    = w_\causal^2 + w_\spu^2 + 2w_\causal p_\causal w_\spu p_{\spu_e}, \\
    \expect_e[\out \labl^\dagger] 
    & = w \expect_e[x \labl^\dagger] 
    = w_\causal p_\causal + w_\spu p_{\spu_e}
\end{align*}

The risk of environment $e$ is 
$$    \loss_{e}(f)     
= \expect_e[ \Vert o - y \Vert ^2]\\
= \expect_e[oo^\dagger] - (\expect_e[oy^\dagger] + \expect_e[oy^\dagger]^\dagger) + \expect_e[yy^\dagger].
$$ 
Therefore, the average risk for training is 
\begin{align}
    \bar{\loss}_{tr}(f)     
    = \frac{1}{2}  \sum_{e \in \{1,2\}} (w_\causal^2 + w_\spu^2 + 2w_\causal p_\causal w_\spu p_{\spu_e} - 2(w_\causal p_\causal + w_\spu p_{\spu_e}) + 1)
\end{align}

\paragraph*{MRI constraint}
\begin{align}
    \expect_{e_1}[\out \labl] &= \expect_{e_2}[\out \labl]  
    \nonumber \\
    \implies &  w_\causal p_\causal + w_\spu p_{\spu_1} = w_\causal p_\causal + w_\spu p_{\spu_2} 
    \nonumber \\
    \implies &  w_\spu = 0
\end{align}
Minimizing $\bar{\loss}_{tr}(f)$ subject to the above constraint yields
$(w_\causal^*, w_\spu^*) =  ( p_\causal, 0 )$
as the constrained optimum.

\paragraph*{IRM-relaxed constraint}
\begin{align}
    \expect_{e_1}[\out \out] - \expect_{e_1}[\out \labl] &= \expect_{e_2}[\out \out] - \expect_{e_2}[\out \labl] 
    \nonumber \\
    \implies &  (2w_\causal - p_\causal)w_\spu p_{\spu_1} = (2w_\causal - p_\causal)w_\spu p_{\spu_2} 
    \nonumber \\
    \implies   
    & \begin{cases}
        w_\spu = 0
        \\
        w_\causal = \frac{1}{2p_\causal}
    \end{cases}
\end{align}
%
Minimizing $\bar{\loss}_{tr}(f)$ subject to the above constraint yields 
$$ (w_\causal^*, w_\spu^*) =  
\begin{cases}
    & (p_\causal, 0) 
    \\
    & (\frac{1}{2p_\causal}, \frac{p_{\spu_1} + p_{\spu_2}}{4})
\end{cases}    
$$
as the constrained optima.

\paragraph*{IRM constraint}
\begin{align}
    \forall e \in \{1,2\} ~~~~~~~
    \expect_{e}[\out \out] - \expect_{e}[\out \labl]
    & = w_\causal^2 + w_\spu^2 + 2w_\causal p_\causal w_\spu p_{\spu_e} - w_\causal p_\causal - w_\spu p_{\spu_e} 
    = 0
\end{align}
which yields 
\begin{align}
    & (w_\causal^*, w_\spu^*) =  
    \begin{cases}
        & (0,0) 
        \\
        &   (p_\causal,  0)
        \\
        &   \left(\frac{1}{2p_\causal} ,  \frac{\sqrt{2p_\causal^2 - 1}}{2p_\causal} \right)  
        \\
        &   \left(\frac{1}{2p_\causal} , - \frac{\sqrt{2p_\causal^2 - 1}}{2p_\causal} \right)  
    \end{cases}
\end{align}
{\red 
as constraints,
which are also the constrained optima of the problem. 
}

\captionsetup[subfigure]{position=top,textfont=normalfont,singlelinecheck=off,justification=raggedright}
\begin{figure}[t]
	\begin{center}
	\subfloat[Train environment $e_1$: $p_\causal = 0.75, \ p_\spu = 1$]{\includegraphics[width=0.8\columnwidth]{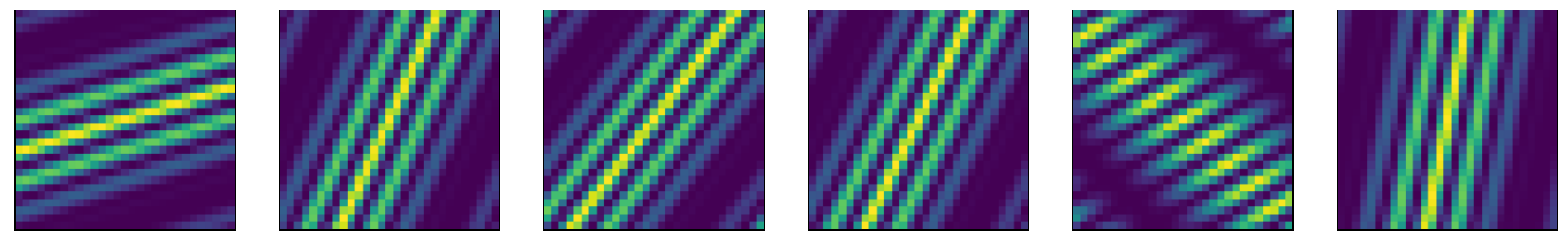}}\quad
	\subfloat[Train environment $e_2$: $p_\causal = 0.75, \ p_\spu = 0.8$]{\includegraphics[width=0.8\columnwidth]{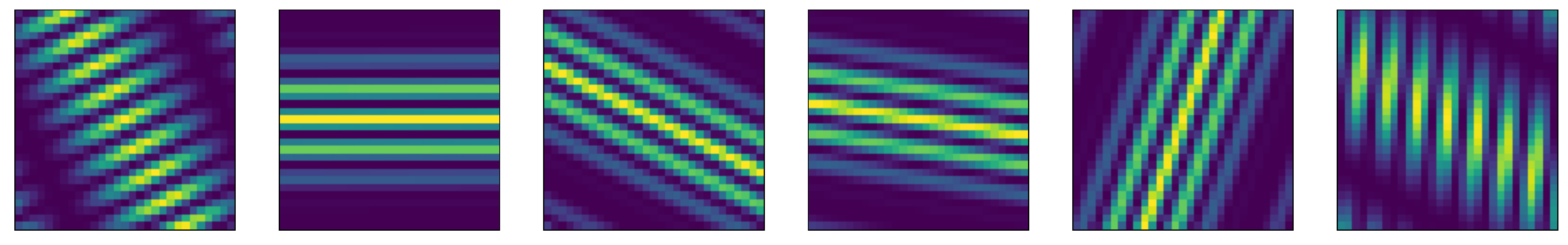}}\quad
	\subfloat[Test environment $e_0$: $p_\causal = 0.75, \ p_\spu = 0$]{\includegraphics[width=0.8\columnwidth]{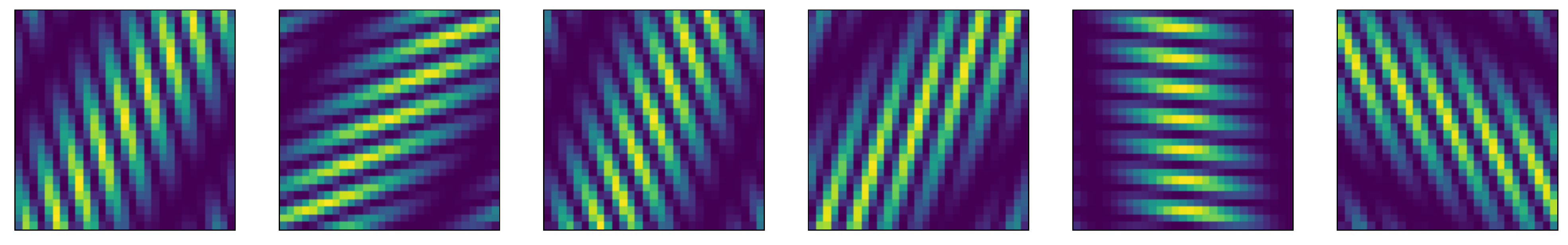}}\quad
	\caption{
	Sample Shape-Texture dataset images.
	}
	\label{fig:images_shape_texture}
	\end{center}
\end{figure}

\captionsetup[subfigure]{position=top,textfont=normalfont,singlelinecheck=off,justification=raggedright}
\begin{figure}[t]
	\begin{center}
	\subfloat[Shape-Texture Classification]{\includegraphics[width=\columnwidth]{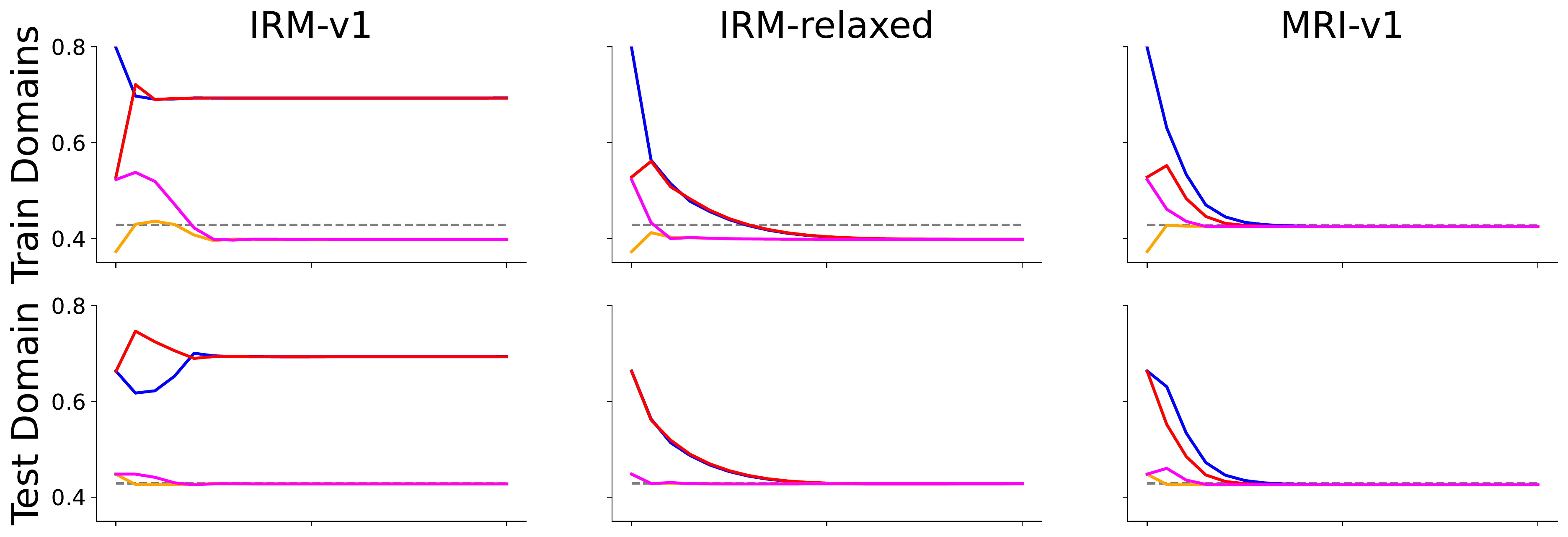}}\quad
	\subfloat[Toy-CMNISTa]{\includegraphics[width=\columnwidth]{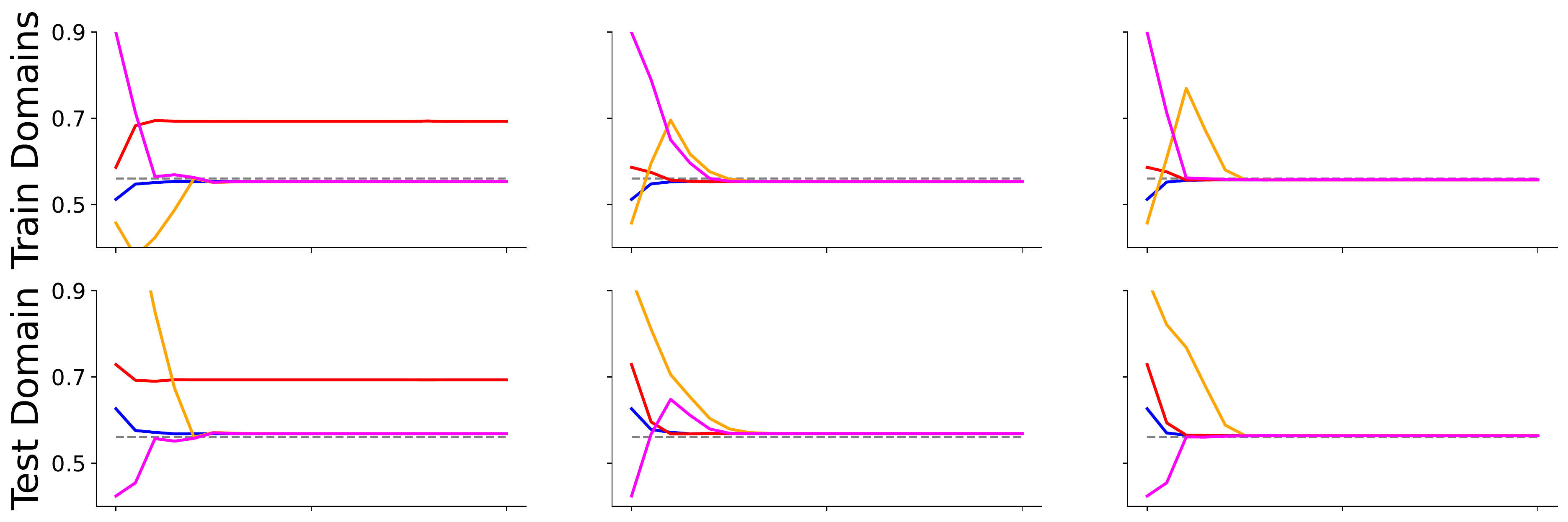}}\quad
	\subfloat[Toy-CMNISTb]{\includegraphics[width=\columnwidth]{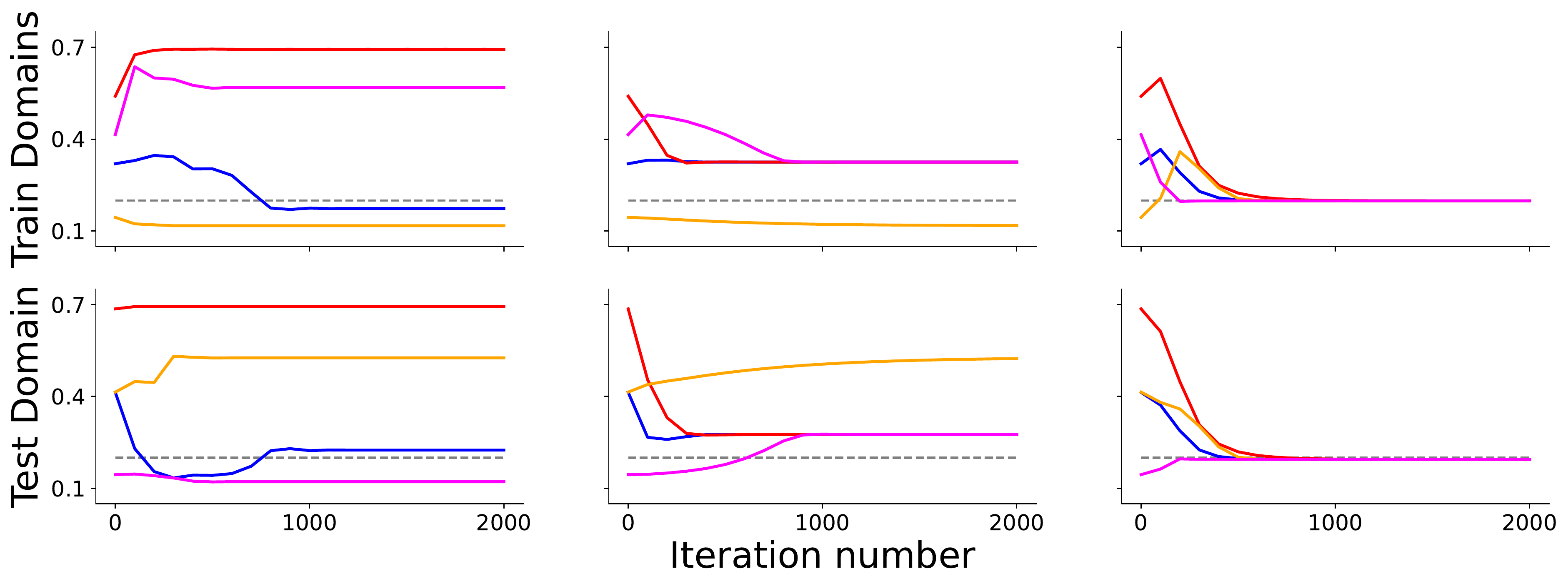}}\quad
	\caption{
	Linear classification tasks. Average train domain risk $\loss_\text{tr}$ (top) and test domain risk $\loss_\text{test}$ (bottom) for four different weight initializations on (A) toy Shape-Texture classification dataset, (B) toy-CMNISTa dataset, and (c) toy-CMNISTb dataset (C). Each color corresponds to a different weight initialization. Black dotted line shows the optimal invariant predictor performance.}
	\label{fig:linear_classification}
	\end{center}
\end{figure}

\section{Generalized version of IRM-v1}

In section \ref{sec:IRMv1}, we presented the linear version of IRM, IRM-v1, which is obtained by limiting the output perturbation in eq~\eqref{eq:IRM_loss_perturb} to the space of linear functions. In this section, we extend our analysis to a more general case, where the output perturbation is restricted to the space of functions defined by
$\delta\phi(\out; \vec{\xi}) =  \sum_{ i \in \mathcal{I}} \delta\phi_i(\out)  \xi_i $,
where $\{\delta\phi_i(\out) \}_{ i \in \mathcal{I}}$ denotes the basis set for the function space. 
This reduces eq~\eqref{eq:IRM_loss_perturb} to

\begin{align}
	\label{eq:IRM_loss_perturb_general}
	\forall e \in \mathcal{E}, 
	~~~~~	\delta  \loss_e(f)  
	&= \sum_{\red i \in \mathcal{I}}  \expect_{e} [ \, \partial_\out l(\out,\labl) \cdot \delta\phi_i(\out) \,]   \xi_i
	= 0 .
\end{align}
for arbitrary $\vec{\xi}$.

Since $ \partial_\out l(\out,\labl) \cdot \phi(\out)  = 
 \phi(\out) ( \sigma(\out) - \labl) $
for standard loss functions$^\text{\ref{sigma}}$,
eq~\eqref{eq:IRM_loss_perturb_linear} 
is equivalent to 
\begin{align}
	\label{eq:IRM_general}
	\forall e \in \mathcal{E}, {\red \forall i \in \mathcal{I}}, ~~~~~~
	\expect_e [  \delta\phi_i(\out) \labl  ] = \expect_e [  \delta\phi_i(\out)\sigma(\out) ] .
 \end{align}
 Note that eq~\eqref{eq:IRM_general} does not prescribe any meaningful invariance relationship, since the RHS term $\expect_e [  \phi(\out)\sigma(\out) ]$ depends on the environment.

\section{Training Procedure}

PyTorch version 1.10.0 was used. For all tasks, Tesla V-100 with a memory of 16GB was used. For all our simulations, we utilized the DomainBed suite (\cite{gulrajani2020search}; available under MIT license).

\subsection{Linear Datasets} For Fig~\ref{fig:loss_landscape}, \ref{fig:linear_classification}, and \ref{fig:toy_CMNIST}, we used a linear predictor $\predictor(\inp) =  w_\causal \latent_\causal+ w_\spu \latent_\spu$. In all cases, we used full-batch training with a batch size of 40000.
We optimized using the penalty method with $\mu = 5\times10^4$. For Fig~\ref{fig:loss_landscape}, we used the SGD optimizer with no momentum. For Fig~\ref{fig:linear_classification} and \ref{fig:toy_CMNIST}, we used the Adam optimizer with betas of (0.975, 0.999). The learning rate was set to 0.005, the total number of gradient steps were set to 2000, and the gradient norms were clipped at 2 for all cases.


\subsection{Non-Linear Datasets}  The results for the nonlinear datasets are presented in Table~\ref{tab:main_results}-\ref{tab:vlcs_acc_results} and Fig~\ref{fig:ST_PM}, \ref{fig:CMNIST_PM}, and, \ref{fig:ST_other}. The results in Table~\ref{tab:main_results}-\ref{tab:vlcs_acc_results} were produced from 10 different model initialization using random seeds.
%

\paragraph{Shape-Texture Datasets} For the shape-texture datasets, we used a single convolutional layer with 12 channels, a kernel size of 5, and stride of 1. The convolutional layer was followed by a 2D adaptive average pooling layer with output size of 1x1 followed by a fully-connected linear layer. For results in Table~\ref{tab:main_results}, for optimization using the ALM method we used $\mu = 10$ and $\lambda = 10$ and for optimization using PM we used $\mu = 10000$. We used mini-batch training with a mini-batch size 8000 and total dataset size 50000. We used the Adam optimizer with betas of (0.9, 0.999). The learning rate was set to 0.01. The total number of gradient steps were set to 2000. Gradient clipping was only used when training using PM such that gradient norms were clipped at 2. We used a weight decay of 0.003 for the classification setting and 0.005 for the regression setting.

For Fig~\ref{fig:ST_PM} and Fig~\ref{fig:ST_other}, we used the same training procedure as above for the penalty method except that the seeds and algorithm-specific parameters were chosen at random to generate 50 random hyperparameter combinations for each algorithm. For IRM-v1 and MRI-v1, $\mu$ was chosen randomly from the distribution $10^{Uniform[2, 5]}$. We used the DomainBed \citep{gulrajani2020search} implementation of IB-IRM, GroupDRO, and MMD. For IB-IRM, $\lambda_{IRM}$ and $\lambda_{IB}$ were both chosen randomly from the distribution $10^{Uniform[2, 5]}$. For GroupDRO, $\eta_{GroupDRO}$ was chosen from the distribution $10^{Uniform[-3, 1]}$. For MMD, $\gamma_{MMD}$ was chosen from the distribution $10^{Uniform[-1, 1]}$.

\paragraph{CMNIST} For the CMNIST datasets, we used the LeNet architecture \citep{lecun1998gradient}.  We used mini-batch training with a mini-batch size of 256. For optimization using the ALM method with $\lambda = 10$ for both datasets, but $\mu = 1$ for CMNISTa and $\mu = 10$ for CMNISTb datasets for results in Table~\ref{tab:main_results}. For optimization using the PM method, we used $\mu = 10000$ for both datasets. We used the Adam optimizer with betas of (0.975, 0.999). The learning rate was set to 0.001. The total number of gradient steps were set to 10000. The weight decay was set to 0.001. Gradient clipping was only used when training using PM such that gradient norms were clipped at 2. For the CMNIST datasets, we also used data augmentation during training. This was done using Torchvision's Random affine transformation class with degrees=30, translate=(0.1, 0.1), and scale=(0.8, 1.2). 

For Fig~\ref{fig:CMNIST_PM}, we used the same training procedure as above for training except that the seeds and algorithm-specific parameters were chosen at random to generate 50 random hyperparameter combinations for each algorithm. For training with PM, $\mu$ was chosen independently and randomly from the distribution $10^{Uniform[2, 5]}$. For training with ALM, $\lambda$ and $\mu$ were chosen randomly from the distribution $Uniform[1, 10]$. The total number of gradient steps were set to 5000.

\paragraph{VLCS/Terra-Incognita} 
{ \green VLCS \citep{fang2013unbiased} and Terra-Incognita \citep{beery2018recognition} are popular domain generalization datasets that contain realistic images. For the binary classification tasks, we only used two classes from each dataset (VLCS: bird and person, Terra-Incognita: opossum and raccoon). We used the LeNet architecture \citep{lecun1998gradient} for both the datasets. We used
mini-batch training with a mini-batch size of 256. We trained the models using the ALM method with $\mu = 10$ and $\lambda = 10$. We used the Adam optimizer with betas of (0.975, 0.999). The learning rate was set to 0.001. The total number of gradient steps were set to 2000. The weight decay was set to 0.001. No gradient clipping was used. The images were resized to the shape (64,64,3) and normalized with the mean (0.485, 0.456, 0.406) and standard deviation (0.229, 0.224, 0.225). For data augmentation, we used Torchvision's RandomHorizontalFlip transform, RandomGrayscale transform, RandomResizedCrop transform with scale = (0.7, 1.0), and ColorJitter transform with brightness, contrast, saturation, and hue set to 0.3.}



\begin{table}
\begin{subtable}[c]{1\textwidth}
\centering
\vspace*{1mm}
\caption*{Risk}
\scalebox{0.7}{
    \begin{tabular}{M{2.5cm}|cc|cc} \toprule
        &  \multicolumn{2}{c|}{VLCS} &
     \multicolumn{2}{c}{Terra-Incognita} \\
      & Train & Test & Train & Test \\ \midrule
    ERM
    & $0.12 \pm 0.0 $ & $0.92 \pm 0.02 $ 
    & $0.41 \pm 0.01 $ & $1.52 \pm 0.12 $  
    \\[0.3cm]
    IRM-v1 (ALM)
    & $0.14 \pm 0.01 $ & $1.03 \pm 0.04 $ 
    & $0.45 \pm 0.01 $ & $\bm{1.30} \pm 0.21 $  
    \\[0.3cm] 
    MRI-v1 (ALM)
    & $0.24 \pm 0.01 $ & $\bm{0.64} \pm 0.01 $ 
    & $0.46 \pm 0.01 $ & $\bm{1.30} \pm 0.07 $  
    \\[0.05cm]
    \bottomrule
    \end{tabular}}

\end{subtable}

\begin{subtable}[c]{1\textwidth}
\centering
\vspace*{1mm}
\caption*{Accuracy}
\scalebox{0.7}{
    \begin{tabular}{M{2.5cm}|cc|cc} \toprule
        &  \multicolumn{2}{c|}{VLCS} &
     \multicolumn{2}{c}{Terra-Incognita} \\
      & Train & Test & Train & Test \\ \midrule
    ERM
    & $0.96 \pm 0.0 $ & $\bm{0.75} \pm 0.01 $ 
    & $0.81 \pm 0.01 $ & $\bm{0.39} \pm 0.01 $  
    \\[0.3cm] 
    IRM-v1 (ALM)
    & $0.95 \pm 0.0 $ & $\bm{0.74} \pm 0.01 $ 
    & $0.78 \pm 0.01 $ & $\bm{0.38} \pm 0.03 $  
    \\[0.3cm] 
    MRI-v1 (ALM)
    & $0.90 \pm 0.01 $ & $0.71 \pm 0.01 $ 
    & $0.78 \pm 0.01 $ & $0.36 \pm 0.01 $  
    \\[0.05cm]
    \bottomrule
    \end{tabular}}
\end{subtable}
\caption{Performance of different algorithms on VLCS and Terra-Incognita datasets. (Top) Averaged train and test domain risk and (bottom) accuracy. Mean and standard deviation shown up to 2 decimal places.}
\label{tab:vlcs_acc_results}
\end{table}


\begin{figure}[t]
	\begin{center}
    {\includegraphics[width=0.45\columnwidth]{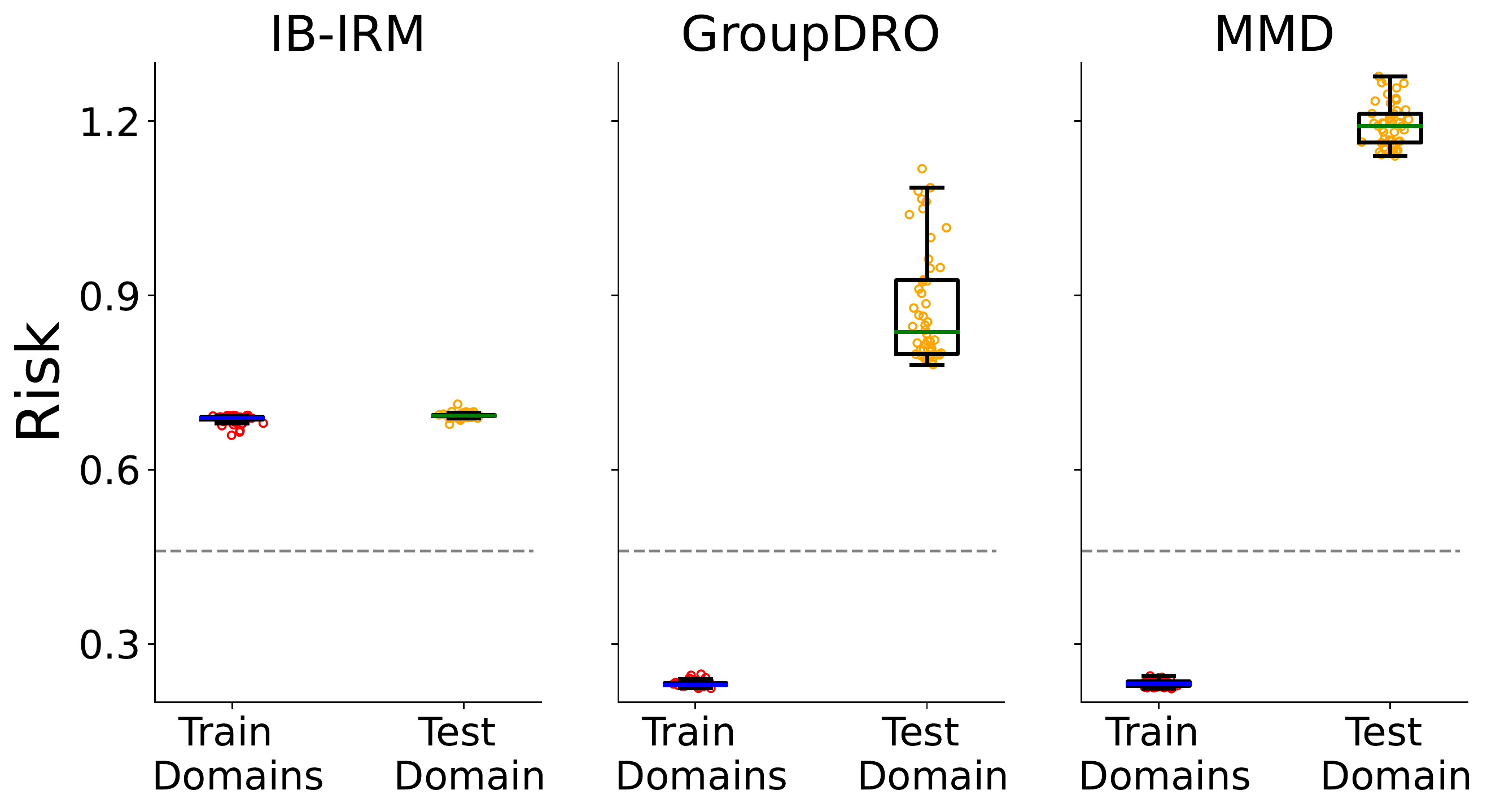}}\quad
    {\includegraphics[width=0.45\columnwidth]{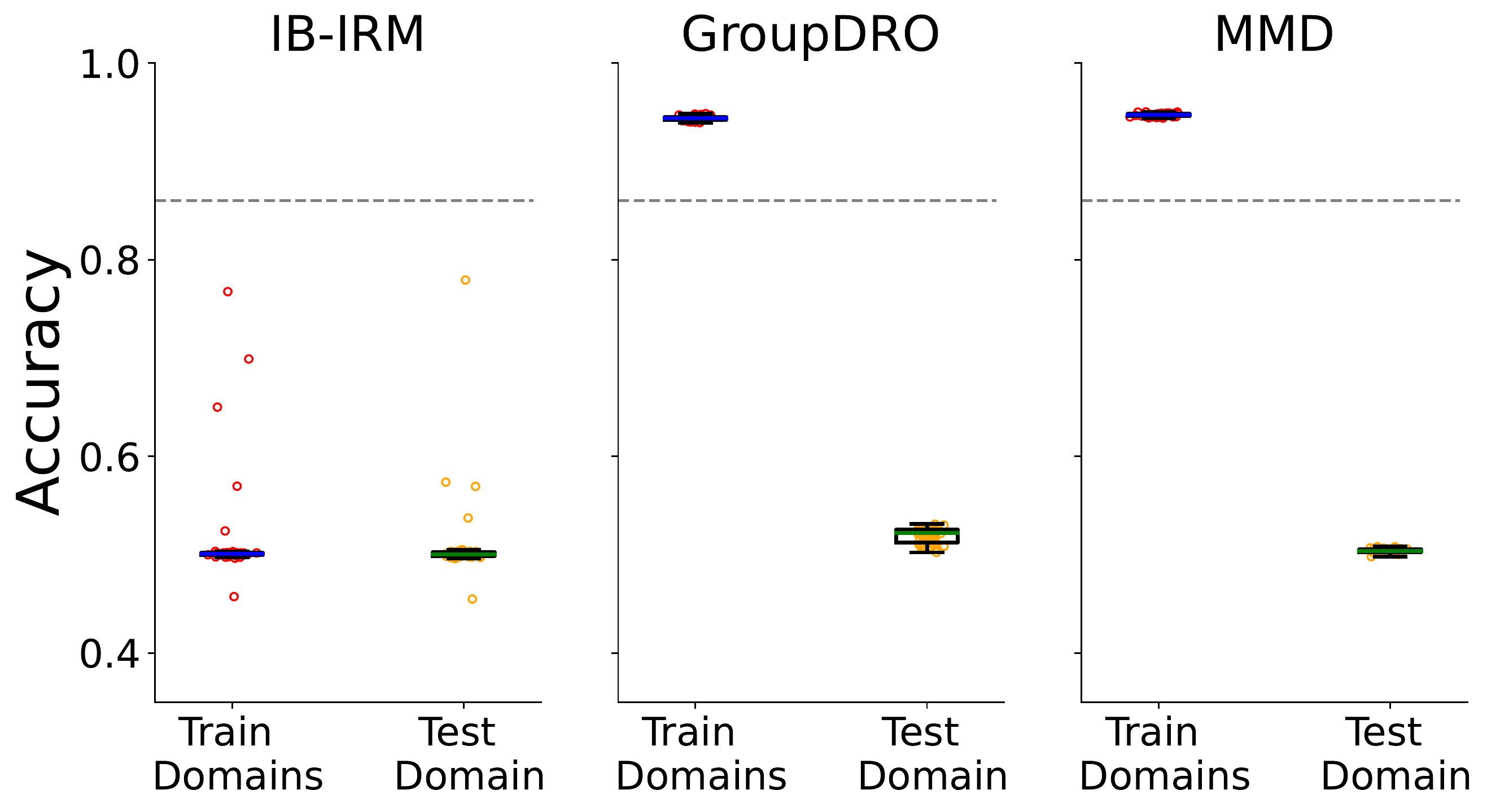}}\quad
	\caption{
	Performance of IB-IRM, GroupDRO and MMD  on shape-texture classification dataset. 
	(Left) averaged train and test domain risk 
	and (right) accuracy 
	for 50 randomly drawn hyperparameters. Grey dashed line denotes the Oracle performance.
}
	\label{fig:ST_other}
	\end{center}
\end{figure}

\captionsetup[subfigure]{position=top,textfont=normalfont,singlelinecheck=off,justification=raggedright}
\begin{figure}[t]
	\begin{center}
	\subfloat[Toy-CMNISTa: Constraint]{\includegraphics[width=0.3\columnwidth]{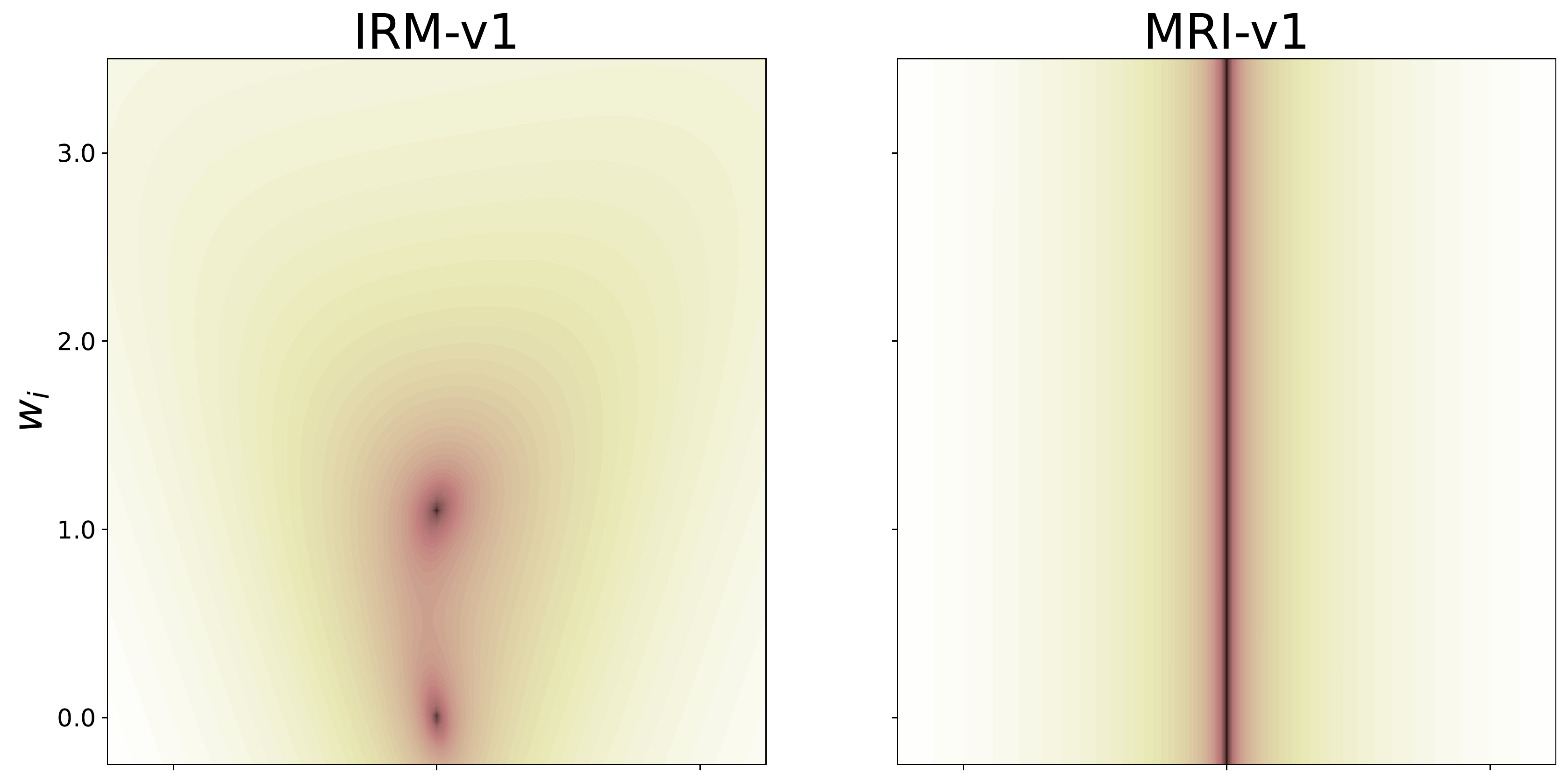}}\quad
	\subfloat[Toy-CMNISTa: Risk]{\includegraphics[width=0.3\columnwidth]{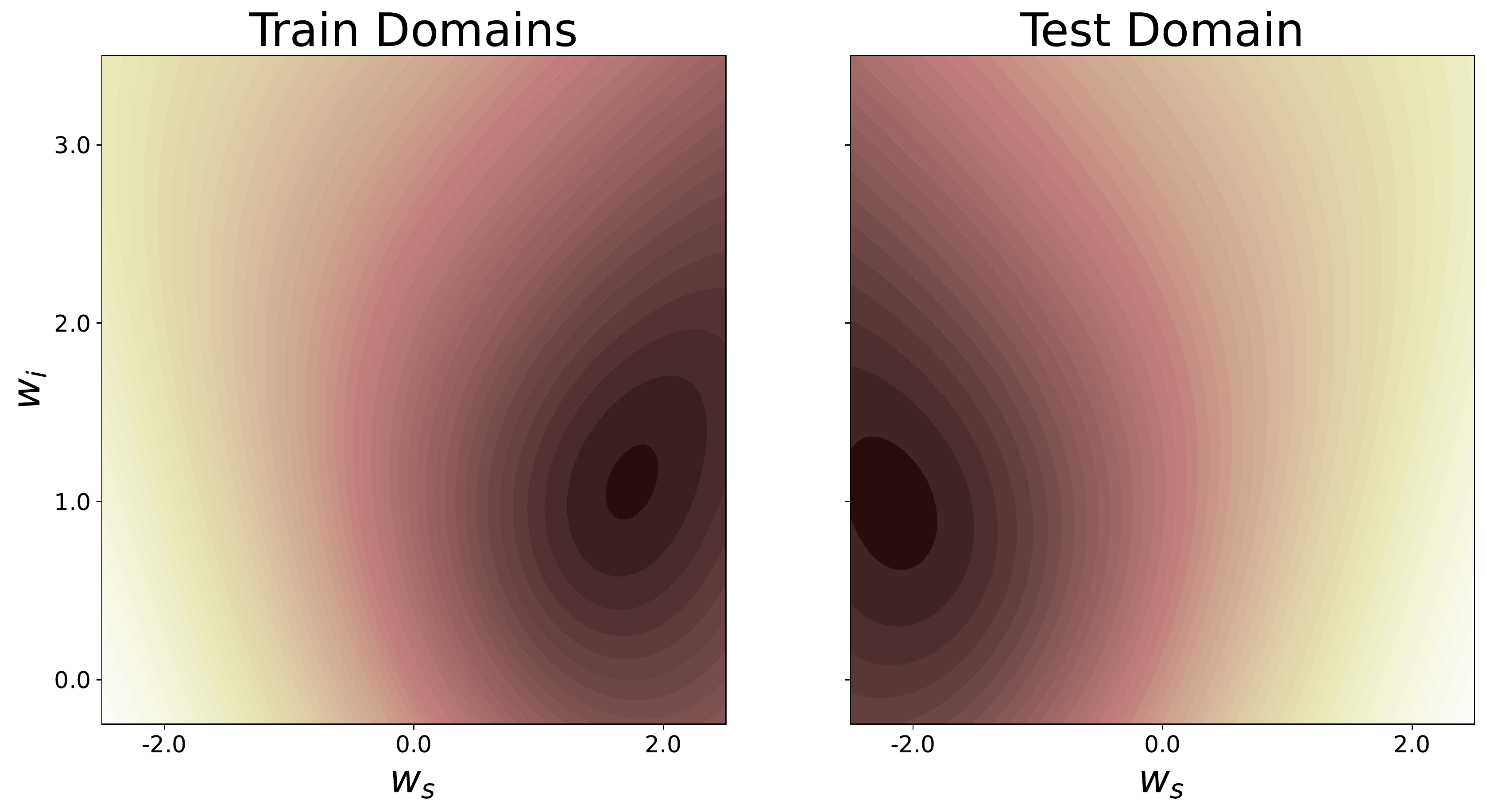}}\quad
	\subfloat[Toy-CMNISTa: Accuracy]{\includegraphics[width=0.3\columnwidth]{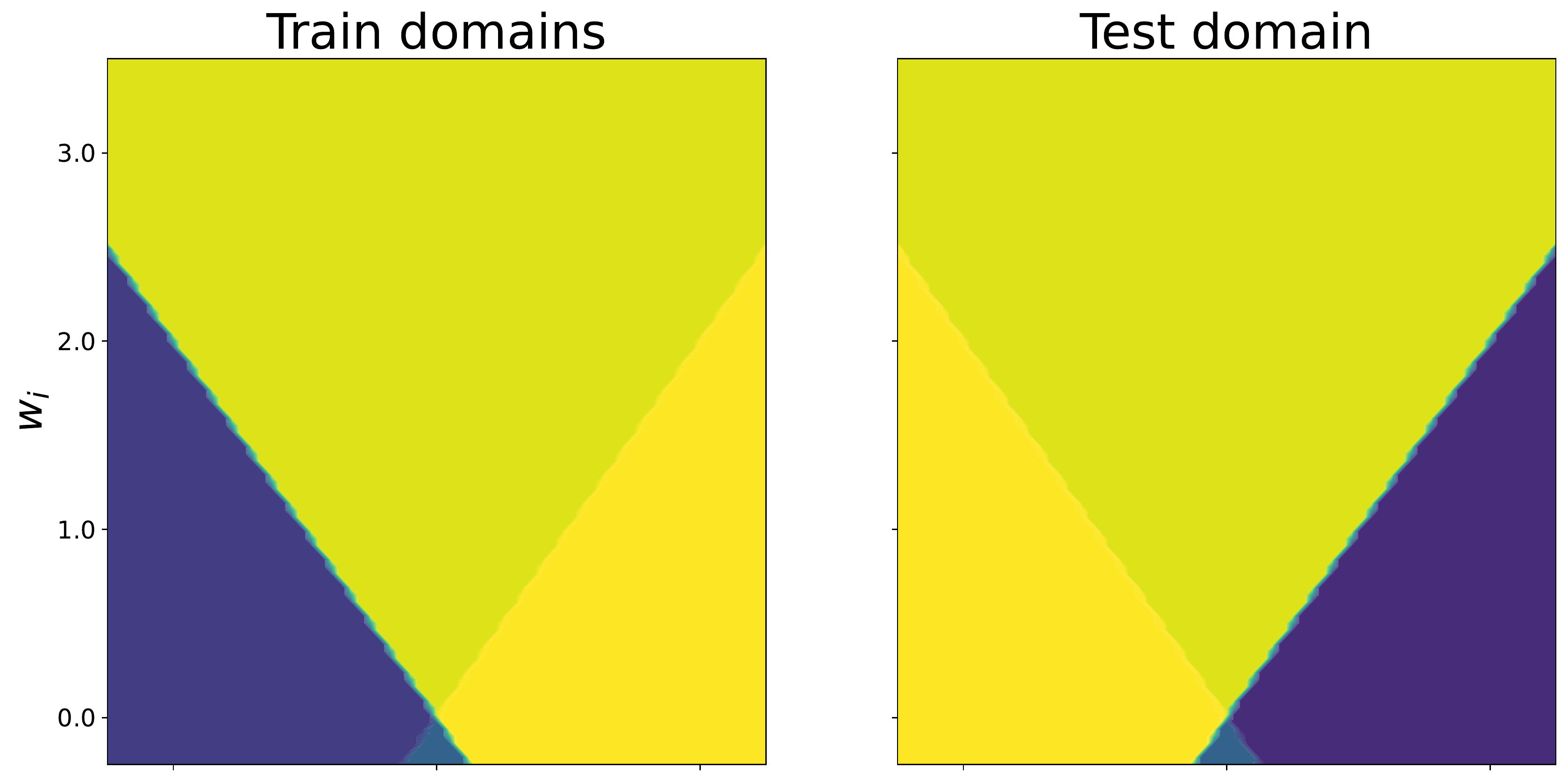}}\quad
	\subfloat[Toy-CMNISTb: Constraint]{\includegraphics[width=0.3\columnwidth]{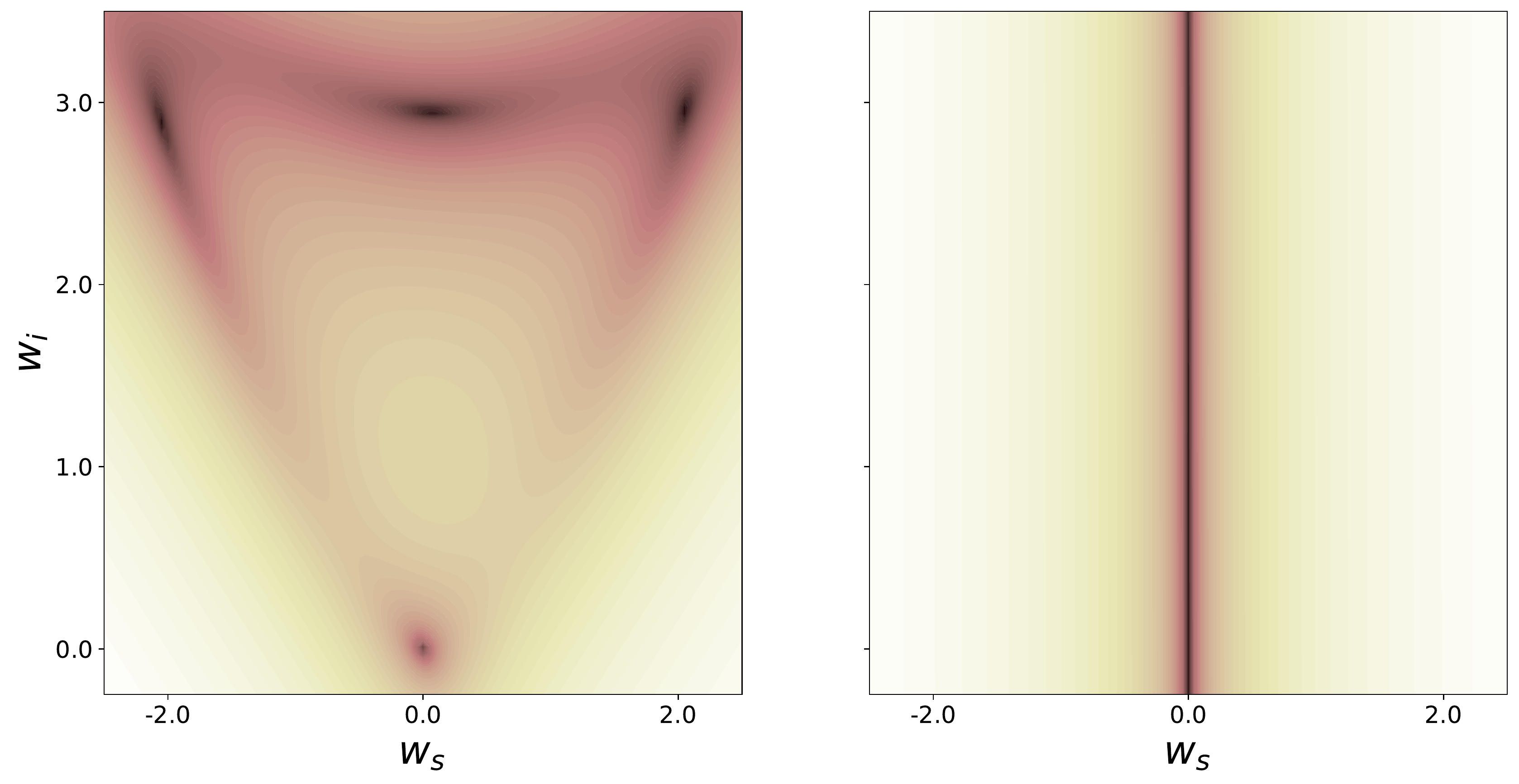}}\quad
	\subfloat[Toy-CMNISTb: Risk]{\includegraphics[width=0.3\columnwidth]{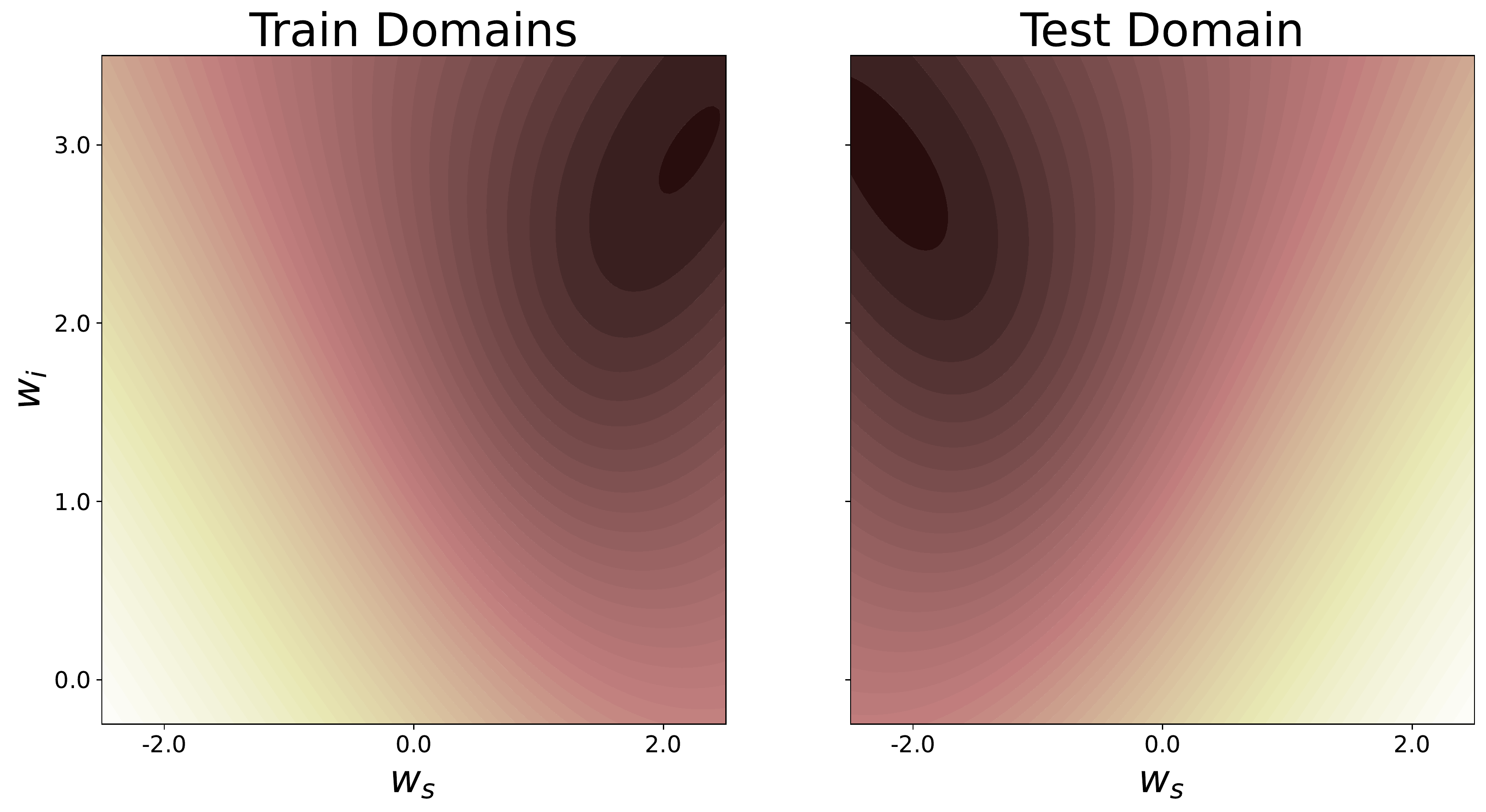}}\quad
	\subfloat[Toy-CMNISTb: Accuracy]{\includegraphics[width=0.3\columnwidth]{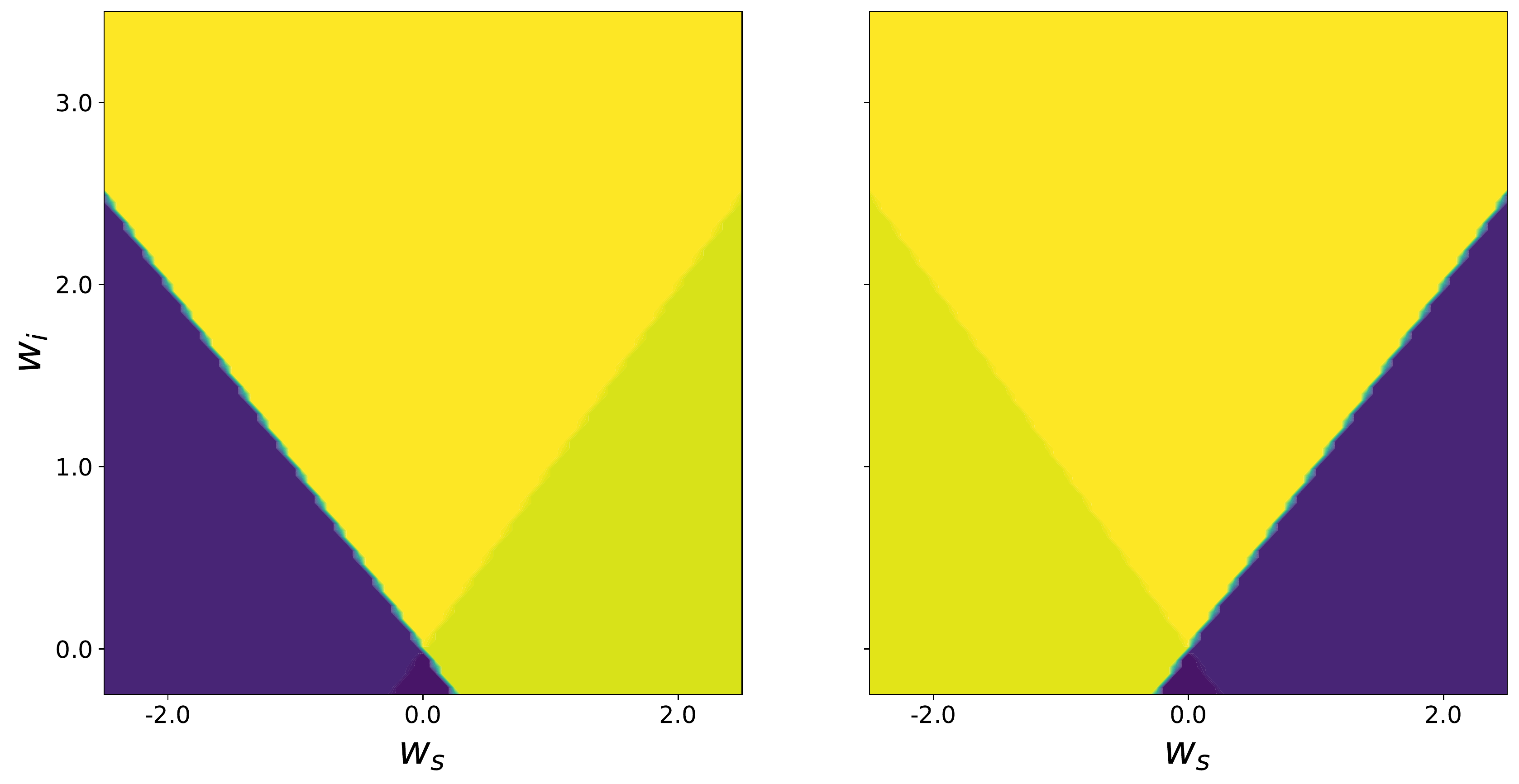}}\quad
	\caption{Linear classification of Toy-CMNISTa/b datasets. 
    (A) Squared constraint function $\left\lVert \vec{c}(\predictor) \right\rVert ^2$ for IRM-v1 (left) and MRI-v1(right) on toy-CMNISTa dataset. Y-axis: weight for the invariant feature $w_\causal$. X-axis: weight for the spurious feature $w_\spu$. (B) Averaged risk over train (left) and test (right) domains as a function of the predictor weights on toy-CMNISTa dataset.
	(C) Averaged accuracy over train (left) and test (right) domains as a function of the predictor weights on toy-CMNISTa dataset.
	(D)-(F) Same as panels A-C except on toy-CMNISTb dataset. 
	}
	\label{fig:toy_CMNIST}
	\end{center}
\end{figure}

\begin{figure}[t]
	\begin{center}
	\centering
	\includegraphics[width=0.9\columnwidth]{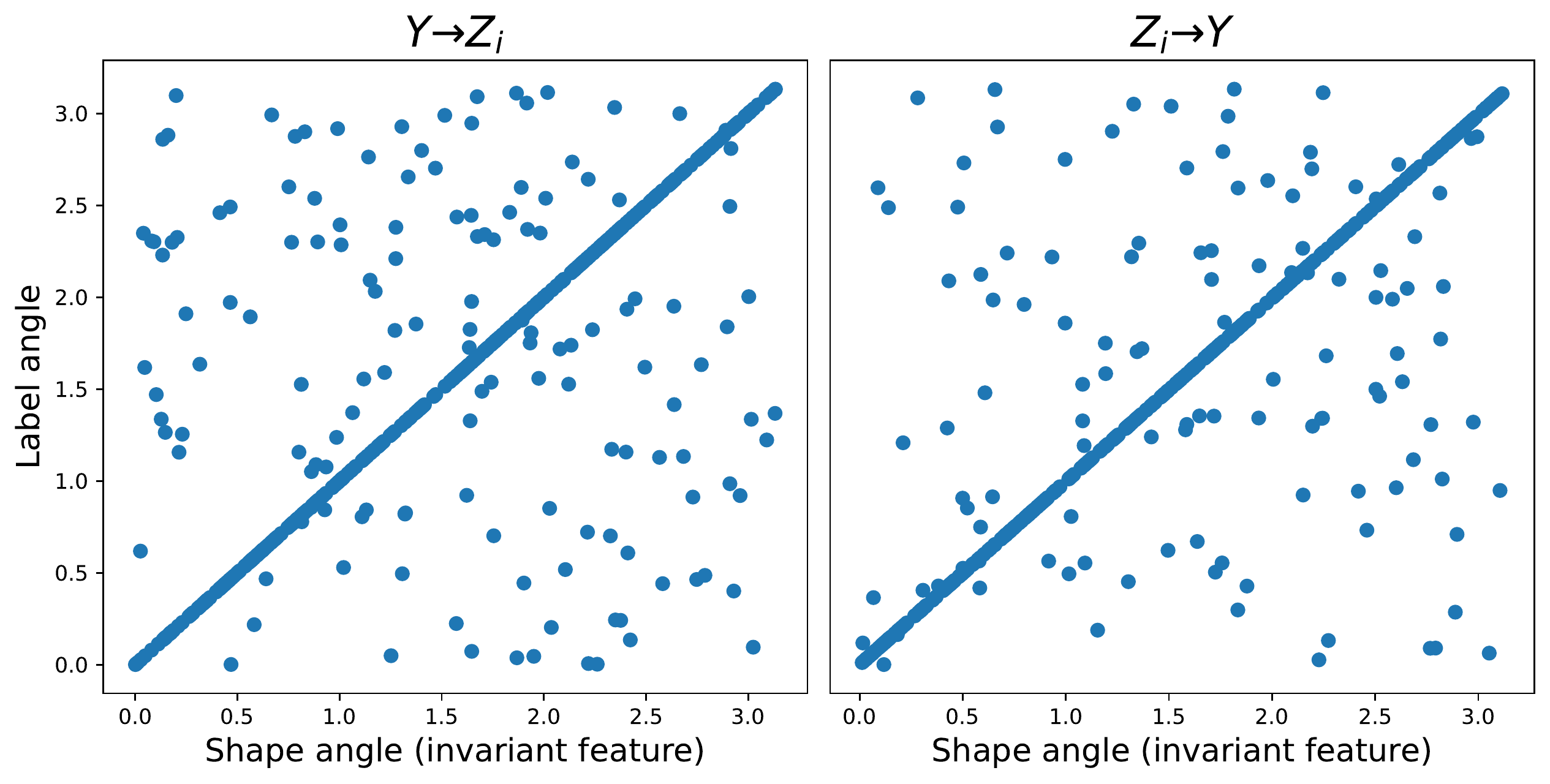}
	\caption{Joint distribution of label angle and shape angle (invariant feature) for the Shape-Texture datasets. This shows that the Shape-Texture dataset generated by either causal directions $\Latent_{\causal} \to \Labl$ or $ \Labl \to \Latent_{\causal}$ are equivalent.} 
	\label{fig:joint_dist}
	\end{center}
\end{figure}

\end{document}